\documentclass[twoside,11pt]{article}

\usepackage{blindtext}

\usepackage[utf8]{inputenc}
\usepackage[T1]{fontenc}
\usepackage{lmodern}
\usepackage{hyperref}
\usepackage{url}            
\usepackage{booktabs}       
\usepackage{amsfonts}       
\usepackage[numbers]{natbib}
\usepackage{tabularx}
\usepackage{nicefrac}       
\usepackage{algorithm,algpseudocode}
\usepackage{amsfonts}
\usepackage{multicol}
\usepackage{amsmath}
\usepackage{amsthm}
\usepackage{amssymb}
\usepackage[title]{appendix}
\usepackage{bm}
\usepackage{courier}
\usepackage[usenames,dvipsnames]{color}
\usepackage{enumitem}
\usepackage{graphicx}
\usepackage[margin=0.9in]{geometry}
\usepackage{url}
\usepackage{multirow}
\usepackage[dvipsnames]{xcolor}
\usepackage{epsfig}
\usepackage{graphicx}
\usepackage{subcaption}
\usepackage{wrapfig}   
\usepackage{caption}   

\hypersetup{
	colorlinks,
	linkcolor={red!80!black},
	citecolor={blue!50!black},
	urlcolor={blue!80!black}
}

\usepackage{subcaption}
\usepackage{graphicx}
\usepackage{caption}
\title{Stratified Sparse Attention: Reducing the Sparse-to-Dense Gap at Extreme Sparsity}

\newcommand{\samethanks}[1][\value{footnote}]{\footnotemark[#1]}

\date{}

\author{
\begin{tabular}{c@{\qquad}c@{\qquad}c}
Sahil Joshi\thanks{Equal contribution.}\,\,\thanks{\scriptsize Department of Computer Science, Rice University, TX, USA.} &
Prithvi Dixit\samethanks[1]\,\,\thanks{\scriptsize Department of Electrical Engineering and Computer Sciences, UC Berkeley, CA, USA.} &
Aryan Katiyar\thanks{\scriptsize Department of Computer Science, Stanford University, CA, USA.}
\\[8pt]
Agniva Chowdhury\samethanks[2] &
Joseph E. Gonzalez\samethanks[3] &
Anshumali Shrivastava\samethanks[2]
\\[8pt]
\multicolumn{3}{c}{
Aditya Desai\thanks{Co-lead.}\,\,\,\,\thanks{\scriptsize Department of Computer Science and Engineering, Indian Institute of Technology Bombay, India.}
\qquad
Kumar Krishna Agrawal\samethanks[5]\,\,\,\,\samethanks[3]
}
\end{tabular}
}
\usepackage{bibentry}

\input{./Definitions}

\begin{document}

\maketitle
\begin{abstract}
\noindent
Large Language Model inference over long contexts is increasingly bottlenecked by the cost of attending to a growing Key-Value (KV) cache. Sparse attention reduces this cost by accessing only a fraction of the cache, but existing methods face a trade-off between deterministic retrieval, which captures high-scoring tokens but discards residual attention mass, and sampling, which improves coverage at the cost of higher variance. Hybrid methods combine top-$k$ retrieval with uniform sampling, yet still exhibit a substantial gap from dense attention at aggressive sparsity. We introduce \emph{Stratified Sparse Attention (SSA)}, a unified framework combining deterministic retrieval with score-guided sampling. SSA deterministically attends to high-scoring tokens while sampling the residual using proxy attention scores and correcting for their inclusion probabilities, thereby recovering attention mass beyond the retrieval set. SSA encompasses top-$k$, top-$p$, and hybrid top-$k$+sampling as special cases. Across RULER, LongBench, LOFT, and AIME, and five models ranging from 3B to 27B parameters, SSA consistently improves over the baselines at matched sparsity. On RULER-HARD with Ministral-3-8B, SSA improves accuracy over the strongest practical baseline from 81.8\% to 84.2\% at 32K and from 63.8\% to 69.3\% at 128K under $50\times$ attention sparsity, recovering 94\% and 85\% of dense-attention performance, respectively. With optimized CUDA kernels, SSA achieves up to $2.96\times$ decode speedup over FlashInfer at 1M context, demonstrating that its accuracy gains translate into practical long-context inference acceleration. We open-source our code at \url{https://github.com/skylight-org/SSA-Paper-Code}.
\end{abstract}

\section{Introduction}

Large language models (LLMs) are rapidly expanding into long-context workloads, spanning large document collections~\citep{li2023large}, image analysis~\citep{dosovitskiy2021image, Peebles_2023_ICCV}, software engineering~\citep{hou2024large, chen2024deep}, databases~\citep{Zhou2024DBGPTLL}, video processing~\citep{ruan2021transformer}, and long-horizon agents that continuously accumulate observations, tool outputs, and interaction histories~\citep{cao2026coding, chen2026horizon}. As context lengths approach millions of tokens, efficient inference becomes increasingly challenging. During autoregressive decoding, each newly generated token attends to an ever-growing key-value (KV) cache, requiring keys and values to be repeatedly transferred from GPU high-bandwidth memory (HBM) onto on-chip SRAM for attention computation~\citep{shah2024flashattention, kwon2023efficient}. At long contexts, this makes decoding increasingly memory-bandwidth bound: the attention cost per decoding step grows linearly with context length. Sparse attention addresses this bottleneck by attending to only a small fraction of the KV cache~\citep{pmlr-v235-tang24l, liu2024retrievalattention, pqcache2025}, reducing both memory traffic and attention latency. However, realizing these gains in practice requires sparse attention to satisfy two competing objectives: \emph{faithfully approximate dense attention at aggressive sparsity} while remaining \emph{efficient enough to translate sparsity into end-to-end inference speedups}~\citep{pagliardini2023fast, lin2024s2attention}.
\\
\\
\noindent
Existing sparse-attention methods can be broadly categorized by how they allocate a limited attention budget. First, \emph{approximate retrieval} methods use proxy attention scores to deterministically select high-scoring tokens, typically through top-$k$ retrieval. Representative methods include PQCache~\citep{pqcache2025}, and Quest~\citep{pmlr-v235-tang24l}, with recent approaches such as Tactic~\citep{zhu2025topp} extending this paradigm to top-$p$ selection. While effective when attention is highly concentrated, deterministic retrieval can miss substantial attention mass when the distribution is diffuse or long-tailed~\citep{chen2024magicpig,desai2025vattention}. Second, \emph{sampling-based} methods such as MagicPig~\citep{chen2024magicpig} use importance sampling to approximate dense attention, providing broader coverage but introducing sampling variance that can lead to a larger gap from dense attention at limited budgets (see Table~\ref{tab:ruler_hard_32k_all_models} in Appendix). Finally, \emph{hybrid} methods such as vAttention~\citep{desai2025vattention} combine deterministic top-$k$ retrieval with uniform sampling~\citep{jeffreyuniform}, preserving high-confidence tokens while extending coverage beyond the top-$k$ set. Although this combination improves the trade-off between exploitation and coverage, a substantial gap to dense attention remains at aggressive sparsity (see Table~\ref{tab:ruler_hard_32k_all_models} in Appendix). See Section~\ref{sec:related} for more details. These observations motivate a fundamental question: \textit{Can we design a more effective sampling strategy that substantially narrows the sparse-to-dense accuracy gap?}

\begin{figure}[t]
    \centering

    \begin{subfigure}[t]{0.49\linewidth}
        \centering
        \includegraphics[width=\linewidth]{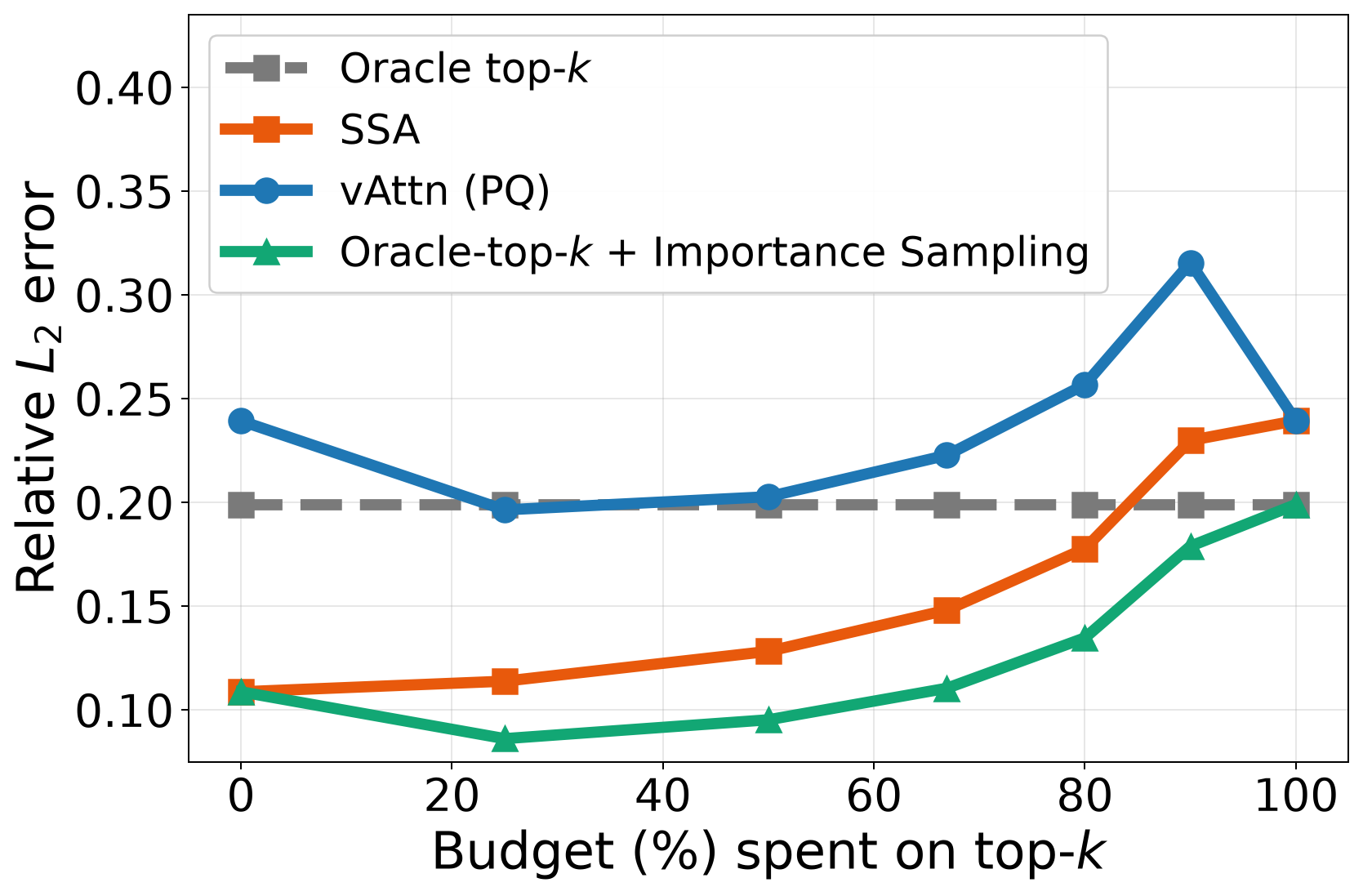}
    \end{subfigure}
    \hfill
    \begin{subfigure}[t]{0.49\linewidth}
        \centering
        \includegraphics[width=\linewidth]{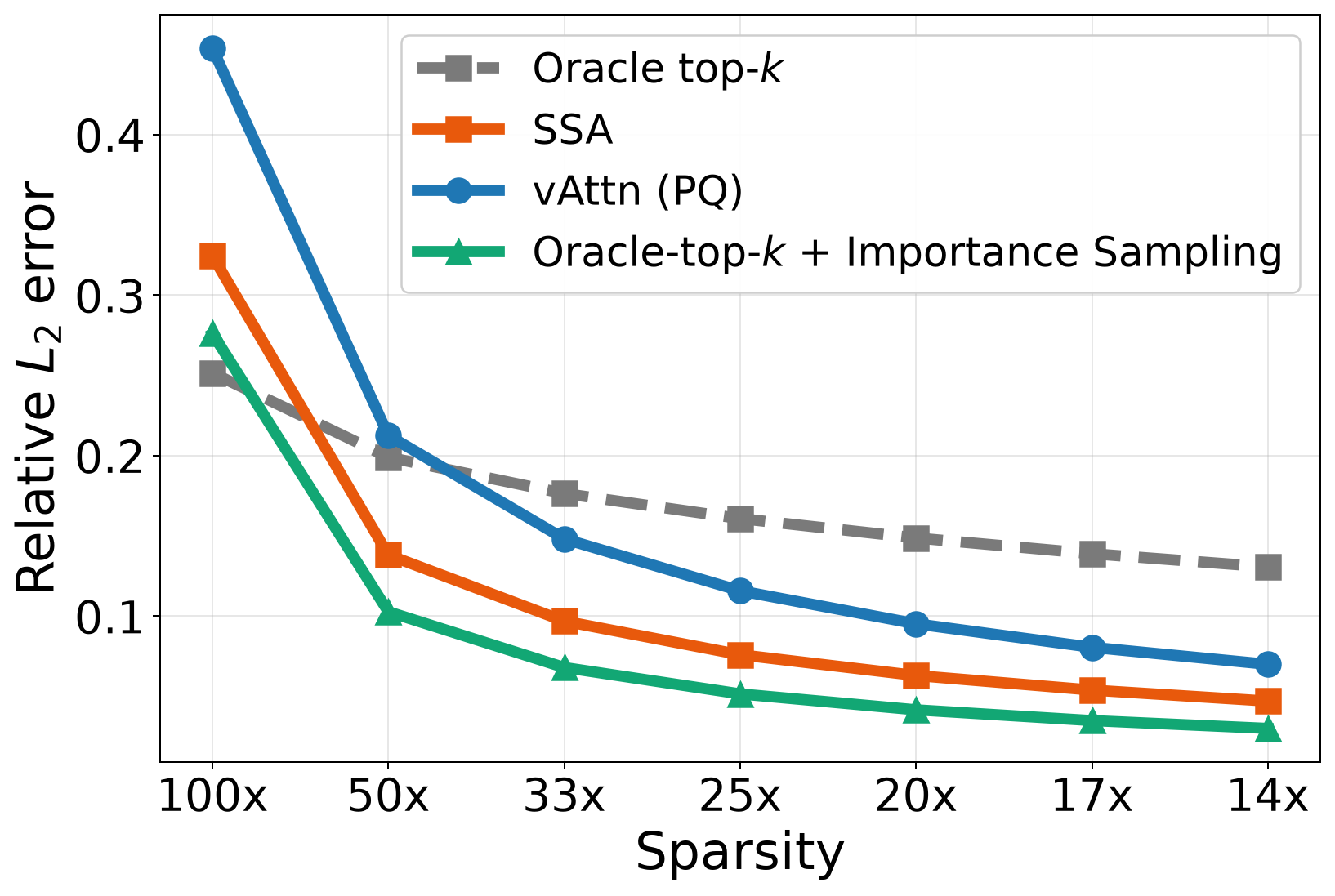}
    \end{subfigure}
    \caption{Relative $L_2$ error when varying the top-$k$:sampling budget allocation at $50\times$ sparsity (left) and varying sparsity with a fixed 3:2 top-$k$:sampling budget split (right).}
    \label{fig:ssa_error_analysis}
\end{figure}
\noindent
Our preliminary studies reveal a key limitation of uniform sampling: while it provides coverage beyond the deterministic top-$k$ set, it ignores the highly non-uniform distribution of residual attention mass\footnote{Oracle-top-\(k\) + Importance Sampling uses exact attention scores for both top-\(k\) selection and residual importance sampling, providing an oracle reference for SSA.}. As shown in fig.~\ref{fig:ssa_error_analysis}, score-guided importance sampling (the principle that SSA uses) consistently achieves lower relative $L_2$ error than uniform sampling. Motivated by this observation, we introduce \textbf{Stratified Sparse Attention (SSA)}, a unified framework that combines deterministic retrieval of high-confidence heavy hitters with \emph{score-guided stochastic exploration} of the remaining context. Rather than treating all residual tokens equally, SSA uses proxy attention scores to preferentially explore promising regions while preserving stochastic coverage of the long tail. This enables the sparse attention budget to better capture the attention mass that would otherwise be lost under aggressive sparsity. We evaluate SSA across diverse models, including Llama-3.1-8B-Instruct~\citep{llama31}, Qwen3.5-4B, Qwen3.5-9B, Ministral-3-3B-Instruct-2517, and Ministral-3-8B-Instruct-2517~\citep{liu2026ministral3}, and across RULER~\cite{hsieh2024ruler}, LongBench~\cite{bai2024longbench}, LOFT~\citep{lee2024longcontext}, and AIME~\cite{dekoninck2026matharena}. Across these evaluations, SSA consistently improves over deterministic, sampling-based, and hybrid sparse-attention baselines, substantially narrowing the gap to dense attention at extreme sparsity. Our contributions can be summarized as follows: 
\\
\noindent
\textbf{I. Unified Attention Paradigm:} We introduce a unified sparse-attention framework that partitions tokens into deterministic and stochastic strata using a generic proxy score. By varying how tokens are retained or sampled, the framework recovers top-$k$, top-$p$, and hybrid top-$k$+sampling as special cases of importance-sampling-inspired sparse attention. \
\\
\noindent
\textbf{II. Theoretical Guarantees:} We provide rigorous bounds for our proposed algorithm, characterizing how closely it approximates dense attention under a limited attention budget. \
\\
\noindent
\textbf{III. Efficient Kernel Support:} We develop efficient CUDA/Triton kernels that realize the computational benefits of our approach in practice, delivering end-to-end decoding speedups over dense attention.
\begin{figure*}[t]
    \centering

    \begin{minipage}[t]{0.48\textwidth}
        \centering
        \includegraphics[width=\linewidth]{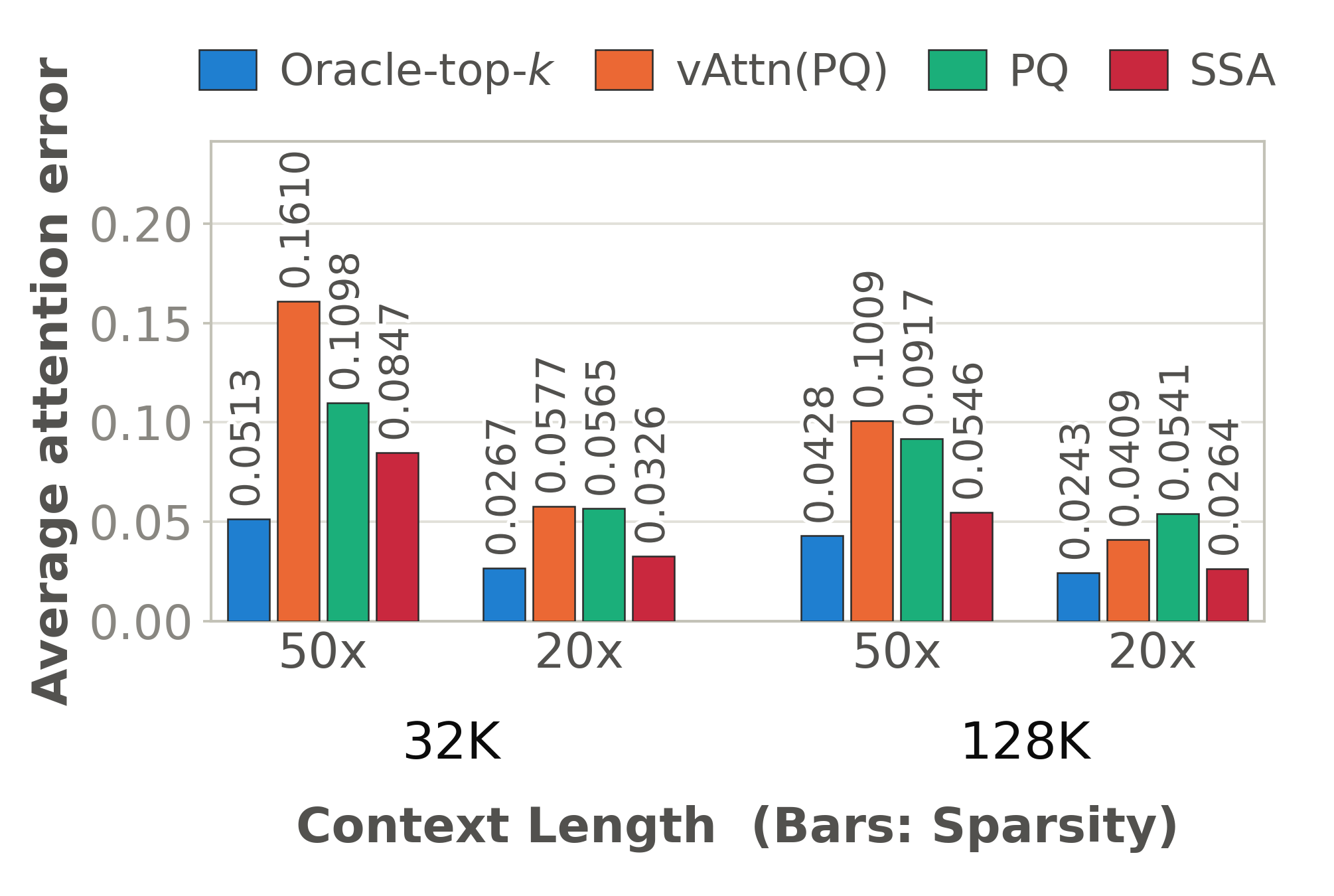}
        \vspace{-0.8em}
        
        {(a) Qwen3.5-4B: Error}
    \end{minipage}
    \hfill
    \begin{minipage}[t]{0.48\textwidth}
        \centering
        \includegraphics[width=\linewidth]{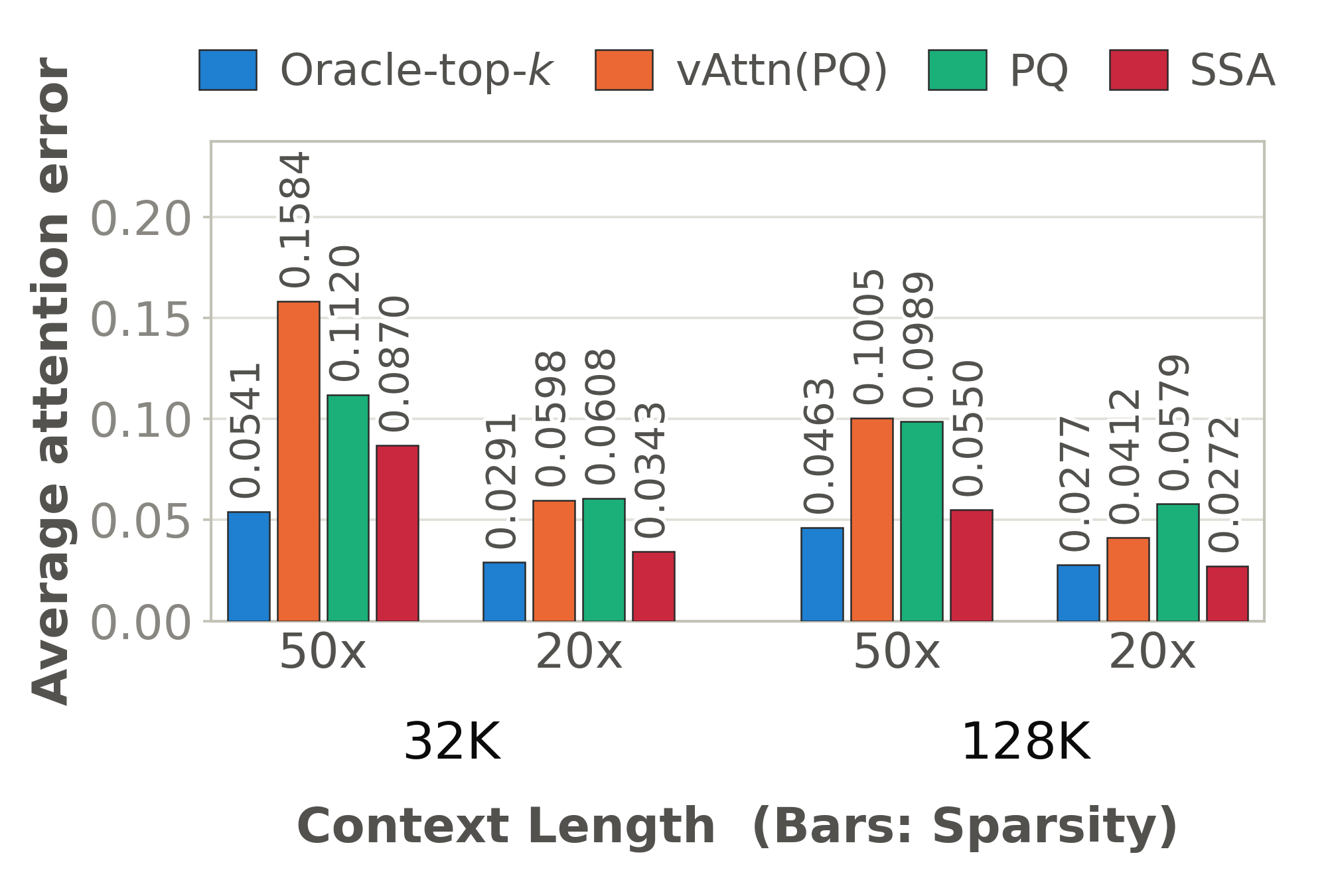}
        \vspace{-0.8em}
        
        {(b) Qwen3.5-9B: Error}
    \end{minipage}
    \vspace{-0.3em}
    \caption{Average attention error across different context lengths and sparsity levels for Qwen3.5-9B and Qwen3.5-4B on LOFT~\citep{lee2024longcontext}.}
    \label{fig:qwen3.5_error}
\end{figure*}

\section{Related Work}
\label{sec:related}

In the context of SSA, the extensive prior work can be organized into three broad and comprehensive research directions:
\\
\\
\noindent
\textbf{Head-level compression and KV eviction:} One line of work reduces KV-cache overhead by sharing or compressing KV heads. Multi-Query Attention (MQA)~\citep{shazeer2019fast} shares a single KV head across all query heads, while Grouped-Query Attention (GQA)~\citep{ainslie2023gqa} and Multi-Head Latent Attention (MLA)~\citep{deepseekai2024} provide more flexible compression to balance efficiency and model quality. A complementary line of work dynamically evicts less important KV entries during inference. Methods such as SnapKV~\citep{li2024snapkv}, DynamicKV~\citep{zhou2025dynamickv}, KVZip~\citep{kim2025kvzip}, KeyDiff~\citep{park2025keydiff}, CurDKV~\citep{sengupta2025valueguided}, and CapKV~\citep{yang2026rethinking} use different importance signals and eviction policies to reduce cache size and subsequent attention computation. These approaches are complementary to sparse attention: KV compression reduces the tokens or representations retained across decoding steps, whereas sparse attention dynamically selects which retained tokens to access for each query. The two can therefore be composed to further reduce memory and decoding costs.
\\
\\
\noindent
\textbf{Approximate top-$k$/top-$p$ methods:}
A substantial body of sparse-attention work focuses on finding the tokens with the largest attention scores. Methods such as Quest~\cite{pmlr-v235-tang24l},
PQCache~\cite{pqcache2025}, SqueezedAttention~\citep{hooper2024fast}, Adamas~\citep{zhang2024adamas}, SOCKET~\cite{joshi2026socket},
Sketch-and-Walk~\cite{le2026scout},
RetrievalAttention~\cite{liu2024retrievalattention},
Loki~\cite{loki2024lowrank},
HashAttention~\cite{desai2025hashattention}, and
Double Sparsity~\cite{yang2024double} employ different proxy scores or retrieval mechanisms to approximate the top-$k$ tokens under dense attention. These methods are effective at recovering high-attention ``heavy hitters,'' but a fixed top-$k$ selection discards all tokens outside the retrieval set. Consequently, when non-negligible attention mass is distributed across the residual tail, or when approximation error causes relevant tokens to fall just below the top-$k$ boundary, this discarded mass can create a substantial gap from dense attention at aggressive sparsity. Tactic~\cite{zhu2025topp} extends this retrieval paradigm from top-$k$ to top-$p$ selection by modeling the decay of attention scores with a power-law distribution and estimating the number of highest-scoring tokens required to satisfy a target attention-mass threshold.
\\
\\
\noindent
\textbf{Sampling-based and Hybrid methods:}
MagicPig~\cite{chen2024magicpig} addresses the limitations of deterministic top-$k$ attention by using LSH~\cite{IndykMotwani1998LSH} to approximately sample tokens and importance correction to estimate dense attention. vAttention~\cite{desai2025vattention} combines deterministic and stochastic selection through a verified hybrid estimator that retains top-$k$ tokens while uniformly sampling from the residual KV cache. KNN-Attention~\citep{haris2024knn} also explores top-$k$ plus sampling via lazy Gumbel sampling, but primarily targets training rather than inference acceleration for existing models. Unlike uniform hybrid sampling, which treats all residual tokens equally, SSA uses approximate attention scores to guide stochastic selection toward tokens more likely to carry residual attention mass (see fig.~\ref{fig:ssa_error_analysis}).

\section{Background and Motivation}
\label{sec:background}

\subsection{Sparse Attention}
We begin by describing the dense attention computation. For clarity, we consider a single query
$\mathbf{q}\in\mathbb{R}^{d}$ attending to a KV cache
$\mathbf{K}:=(\kb_1,\dots,\kb_N)^\top\in\RR{N}{d}$ and $\mathbf{V}:=(\mathbf{v}_1,\dots, \mathbf{v}_N)^\top\in\mathbb{R}^{N\times d}$.
\\
\\
\noindent
\textbf{Full Scaled Dot-Product Attention (SDPA):} Let
$
    s_i = \nicefrac{\mathbf{q}^\top \mathbf{k}_i}{\sqrt{d}}
$
be the similarity score between $\mathbf{q}$ and $\kb_i$, and $\mathbf{a}=(a_1,\dots, a_N)$ with $a_i =
    \nicefrac{w_i}{\sum_{j=1}^{N} w_j}$ be the corresponding attention weights where $  w_i = \exp(s_i)$.
We write SDPA output $\mathbf{o}\in\mathbb{R}^d$ as
\begin{equation}
    \mathbf{o}
    =
    \frac{
        \sum_{i=1}^{N} w_i \mathbf{v}_i
    }{
        \sum_{j=1}^{N} w_j
    }
    =
    \sum_{i=1}^{N} a_i \mathbf{v}_i=\Vb^\ts\mathbf{a}.
    \label{eq:dense-attention}
\end{equation}
\\
\\
\noindent
\textbf{Sparse Attention with deterministic top-$k$ Selection:} Let heavy set $\mathcal{H}\subseteq[N]$ denote a deterministic top-$k$ subset from the $N$ cached tokens. Sparse attention over the set $\mathcal{H}$ is defined as
    $\operatorname{o}_{\mathcal{H}}
    =
    \frac{
        \sum_{i\in\mathcal{H}} w_i \mathbf{v}_i
    }{
        \sum_{j\in\mathcal{H}} w_j
    }$.
While deterministic top-$k$ selection concentrates computation on tokens deemed
most relevant, tokens outside $\mathcal{H}$ do not contribute to the sparse output.
\\
\\
\noindent
\textbf{Sparse Attention with Importance Sampling:}
Let sampled set $\mathcal S\subseteq[N]$ be a non-empty random set of selected tokens.
For each token $i$, let $Z_i=\mathbf 1\{i\in\mathcal S\}$ indicate
whether it is selected, and let $\pi_i=\Pr(i\in\mathcal S)>0$ be its
inclusion probability. Importance-based sparse attention estimates the dense output by weighting each selected token by the inverse of its inclusion
probability:
\begin{equation}
    \widehat{\mathbf o}
    =
    \frac{\sum_{i=1}^{N}(Z_i/\pi_i)w_i\mathbf v_i}
         {\sum_{i=1}^{N}(Z_i/\pi_i)w_i}
    =
    \frac{\sum_{i\in\mathcal S}(w_i/\pi_i)\mathbf v_i}
         {\sum_{i\in\mathcal S}w_i/\pi_i}.
    \label{eq:importance-attention}
\end{equation}
The factors $1/\pi_i$ account for differences in how likely tokens are to be selected. The estimator in eq.~\ref{eq:importance-attention} is an instance of inverse-probability weighting. We next recall the standard draw-based formulation of importance sampling.

\subsection{Importance Sampling}
 Importance sampling~\citep{tokdar2010importance} is a Monte Carlo technique for estimating a quantity using samples drawn from a proposal distribution. Given a target distribution $p(x)$ and a function $f(x)$, consider the expectation $\mu = \mathbb{E}_{x\sim p}[f(x)] = \sum_x p(x)f(x).$ Instead of sampling directly from $p(x)$, importance sampling draws samples $x_1,\ldots,x_m \sim q(x)$ from a proposal distribution $q(x)$ and rewrites the expectation as
$
\mu
= \mathbb{E}_{x\sim q}
\left[
    \frac{p(x)}{q(x)}f(x)
\right].
$ This yields the Monte Carlo estimator

$$
\hat{\mu}
= \frac{1}{m}\sum_{j=1}^{m}
\frac{p(x_j)}{q(x_j)}f(x_j).
$$
\noindent
The ratio $p(x)/q(x)$ is the \emph{importance weight}, which corrects for sampling from $q$ rather than $p$. When $q$ places greater probability on samples with large contributions to the expectation, importance sampling can achieve lower variance than uniform sampling with the same sample budget.

\section{Introducing Stratified Sparse Attention (SSA)}

As already mentioned, deterministic top-$k$ attention~\citep{ge2024model,zhang2023h2o} can omit meaningful attention mass outside the selected set. 
Uniform sampling on residual tokens provides broader coverage, but does not use information about which remaining tokens are more likely to matter. We introduce \textbf{Stratified Sparse Attention (SSA)}, a unified framework that retains tokens with high proxy scores and samples the residual tokens according to those scores. SSA divides a fixed attention budget between two well-known complementary objectives: \emph{exploitation}, by deterministically retaining high-confidence heavy hitters, and \emph{exploration}, by sampling the residual context according to proxy attention scores. This score-aware stratification preserves heavy hitters while providing meaningful stochastic coverage toward more promising residual tokens. We next formalize the SSA mechanism described Algorithm~\ref{alg:ssa}.

\subsection{Final Algorithm}




SSA operates in three stages that divide the sparse-attention budget $k$ between deterministic exploitation and stochastic exploration.

\noindent
(i) \textbf{Proxy-based budget allocation (Steps 1-4):}
SSA converts the proxy scores $\tilde{s}_i$ into probabilities $q_i$ and finds
$\lambda$ such that
$\sum_{i=1}^{N}\min\{1,\lambda q_i\}=k$.
Tokens satisfying $\lambda q_i\geq 1$ form the deterministically
retained heavy set $\mathcal H$. The remaining $m=k-|\mathcal H|$ is allocated to importance sampling from the residual set
$\mathcal R=[N]\setminus\mathcal H$. Thus, the proxy distribution determines the split between deterministic allocation and residual sampling automatically; SSA does not require a
separate choice of heavy-set size.

\noindent
\noindent
(ii) \textbf{Score-guided residual sampling (Steps 5-10):}
SSA normalizes the proxy probabilities over $\mathcal R$ to obtain proposal probabilities $p_i=q_i/\sum_{j\in\mathcal R}q_j$, makes $m$ multinomial draws with
replacement, and retains the unique keys in $\mathcal S$. Let
$\mathcal K=\mathcal H\cup\mathcal S$, and define
$Z_i=\mathbf 1\{i\in\mathcal K\}$ to indicate whether token $i$ is selected. Its inclusion probability is $\pi_i=\Pr(Z_i=1)$.
For $i\in\mathcal H$, $\pi_i=1$. For $i\in\mathcal R$, the token is selected if it appears in at least one of the $m$ draws, giving
$\pi_i=1-(1-p_i)^m.$

\noindent
(iii) \textbf{Exact scoring and correction (Steps 11-15):}
For each token in $\mathcal K$, SSA computes
its exact score $\ell_i=\mathbf q^\top\mathbf k_i/\sqrt d$ and corrected
weight $\hat w_i=\exp(\ell_i)/\pi_i$. It returns the normalized weighted sum
$
    \hat{\mathbf o}.
$
Since repeated draws are retained only once, $|\mathcal K|\leq k$.

\begin{figure*}[t]
\centering

\begin{minipage}[t]{0.49\textwidth}
\vspace{0pt}

\begin{algorithm}[H]
\caption{Stratified Sparse Attention (SSA)}
\label{alg:ssa}
\begin{algorithmic}[1]
\\
\textbf{Input:} Proxy scores $\{\tilde{s}_i\}_{i=1}^{N}$,
query $\mathbf{q}$, keys $\{\mathbf{k}_i\}_{i=1}^{N}$,
values $\{\mathbf{v}_i\}_{i=1}^{N}$, total budget $k$

\vspace{1mm}
\State $q_i \gets
\frac{\exp(\tilde{s}_i)}
{\sum_{j=1}^N\exp(\tilde{s}_j)},
$

\vspace{1mm}
\State Find $\lambda>0$ s.t. $\sum_{i=1}^N \text{min}(1, \lambda q_i) = k$

\vspace{1mm}
\State Define heavy set $\mathcal{H} = \{i: \lambda q_i \geq 1 \}$,
residual set $\mathcal{R} = [N]\setminus\mathcal{H}$, heavy size $h =|\mathcal{H}|$,
sampling budget $m=k-h$

\vspace{1mm}
\State Normalize the proposal over $\mathcal{R}:
p_i = \frac{q_i}{\sum_{j\in\mathcal{R}} q_j} $ for $i \in \mathcal{R}$

\vspace{1mm}
\State Draw with replacement:
$J_1, J_2, \dots J_m \sim \mathbf{p}$

\vspace{1mm}
\State $\mathcal{S} \gets
\operatorname{unique}\{J_1, J_2, \dots, J_m\}$

\vspace{1mm}
\State $\pi_i\gets1-(1-p_i)^m$ for $i\in\mathcal{S}$


\vspace{0.8mm}
\State $\mathcal{K}\gets\mathcal{H}\cup\mathcal{S}$

\vspace{1mm}
\State $\pi_i \gets 1$ for $i \in \mathcal{H}$ 

\vspace{1mm}
\For{$i\in\mathcal{K}$}
    \State $\ell_i\gets
    \mathbf{q}^{\top}\mathbf{k}_i/\sqrt{d}$
    \State $\hat{w}_i\gets\exp(\ell_i)/\pi_i$
\EndFor


\State \Return $\displaystyle\hat{\mathbf{o}}
\gets
\frac{\sum_{i\in\mathcal{K}}\hat{w}_i\mathbf{v}_i}
     {\sum_{i\in\mathcal{K}}\hat{w}_i}$
\end{algorithmic}
\end{algorithm}

\end{minipage}
\hfill
\begin{minipage}[t]{0.46\textwidth}
\vspace{0pt}
\centering

\captionof{table}{
Average RULER-HARD performance at 50$\times$ attention sparsity
across models and context lengths. Full per-task results are
provided in the Appendix.}
\label{tab:ruler_hard_main}


\resizebox{\linewidth}{!}{
\begin{tabular}{llccc}
\toprule
\textbf{Model} & \textbf{Method}
& \textbf{32K} & \textbf{64K} & \textbf{128K} \\
\midrule

\multirow{5}{*}{Llama-3.1-8B}
& Dense      & 88.76 & 86.52 & 71.26 \\
& PQ         & 72.51 & 67.53 & 39.73 \\
& vAttn (PQ) & 66.71 & 66.22 & 39.45 \\
& Oracle-top-$k$  & 78.80 & 71.91 & \textbf{56.10} \\
& SSA        & \textbf{83.67} & \textbf{77.09} & 47.53 \\
\midrule

\multirow{5}{*}{Ministral-3-3B}
& Dense      & 86.69 & 82.19 & 81.49 \\
& PQ         & 70.81 & 69.88 & 66.48 \\
& vAttn (PQ) & 69.26 & 65.77 & 63.06 \\
& Oracle-top-$k$  & 72.81 & 75.09 & \textbf{71.87} \\
& SSA        & \textbf{82.05} & \textbf{75.70} & 69.65 \\
\midrule

\multirow{5}{*}{Ministral-3-8B}
& Dense      & 89.33 & 86.13 & 81.45 \\
& PQ         & 81.83 & 78.02 & 62.99 \\
& vAttn (PQ) & 78.69 & 75.19 & 63.78 \\
& Oracle-top-$k$  & \textbf{86.61} & \textbf{83.41} & \textbf{75.83} \\
& SSA        & 84.17 & 79.91 & 69.25 \\
\midrule

\multirow{5}{*}{Qwen3.5-4B}
& Dense      & 91.95 & 91.11 & 89.45 \\
& PQ         & 91.22 & 90.72 & 88.72 \\
& vAttn (PQ) & 90.56 & 90.61 & 89.28 \\
& Oracle-top-$k$  & 91.39 & 90.39 & 89.39 \\
& SSA        & \textbf{91.45} & \textbf{90.83} & \textbf{89.50} \\
\midrule

\multirow{5}{*}{Qwen3.5-9B}
& Dense      & 93.45 & 92.22 & 91.28 \\
& PQ         & 92.00 & 91.08 & 90.95 \\
& vAttn (PQ) & 91.67 & 91.30 & 90.61 \\
& Oracle-top-$k$  & 92.56 & 92.28 & 91.39 \\
& SSA        & \textbf{93.33} & \textbf{92.36} & \textbf{91.45} \\
\bottomrule
\end{tabular}}

\end{minipage}

\end{figure*}

\subsection{Theoretical Insights}\label{sec:theory}
We now characterize how accurately SSA approximates dense attention. Our analysis of Algorithm~\ref{alg:ssa} is designed for the budget parameter $k\in\{1,\dots,N-1\}$. When $k=N$, every token
is retained (see step~3), so SSA trivially returns dense attention exactly. The residual
set is then empty and requires no sampling analysis. For $k<N$,
the residual set is nonempty \ie, $m\ge1$. Let $a_i$ be the dense
attention weights from Section~\ref{sec:background}, and $\|\Vb\|_2$ denote the spectral
norm of the value matrix. In addition, we introduce the following quantities for our analysis,
\[
A_{\mathcal R}=\sum_{i\in\mathcal R}a_i,
\qquad
\bar a_i=\frac{a_i}{A_{\mathcal R}},
\qquad
\rho=\max_{i\in\mathcal R}\frac{\bar a_i}{p_i},
\]
where $A_{\mathcal R}$ is the true attention mass left for sampling, while $\rho$
measures how well the residual proposal covers that mass. For finite proxy scores,
$q_i>0$ and hence $p_i=q_i/\sum_{j\in\mathcal R}q_j>0$ for every
$i\in\mathcal R$. Since
$\sum_{i\in\mathcal R}p_i(\bar a_i/p_i)
=\sum_{i\in\mathcal R}\bar a_i=1$, the largest ratio must satisfy
$\rho\geq1$. The equality holds precisely when every ratio is equal to one,
\ie, $p_i=\bar a_i$ for all $i\in\mathcal R$.
\\
\\
\noindent
The budget-allocation rule also limits the effect of duplicate residual draws. Indeed, steps~3--4 of Algorithm~\ref{alg:ssa} give
$k=h+\lambda\sum_{i\in\mathcal R}q_i$ as
each heavy token contributes $1$ to the budget equation, and each
residual token contributes $\lambda q_i<1$. Since
$m=k-h$, it follows that
$m=\lambda\sum_{i\in\mathcal R}q_i$. Combining this with
$p_i=q_i/\sum_{j\in\mathcal R}q_j$ gives
$mp_i=\lambda q_i<1$ for every $i\in\mathcal R$. 
Here $mp_i$ is the expected number of times token $i$ appears among the $m$
draws. The largest expected multiplicity among the residual tokens is therefore defined as
\[
c:=\max_{i\in\mathcal R}mp_i<1.
\]
A small value of $c$ corresponds to a diffuse proposal with few repetitions,
whereas a value closer to one indicates a more concentrated proposal and a
greater possibility of repeated draws.
\\
\\
\noindent
Now, to use the proxy-quality parameter $\rho$ in the error bound, we need to
relate the proposal probability $p_i$ to the inclusion probability
$\pi_i$ used by Algorithm~\ref{alg:ssa}. A residual token is retained if it appears in at
least one of the $m$ draws, so $\pi_i=1-(1-p_i)^m$. 
To account for duplicate draws, we define
\[
\kappa(c):=\frac{1-e^{-c}}{c}.
\]
The following lemma quantifies their effect on the inclusion probabilities.
%
%
\begin{lemma}
\label{lem:deduplication}
Let $m\ge1$, let $p_i>0$ for $i\in\mathcal R$, and let $c$ and $\kappa(c)$ be defined above.
For every $i\in\mathcal R$,
$\kappa(c)mp_i
\le \pi_i
\le mp_i$.
Consequently,
$\frac{a_i}{\pi_i}
\le
\frac{A_{\mathcal R}\rho}{\kappa(c)m}$.
\end{lemma}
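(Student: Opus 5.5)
The plan is to prove the two-sided bound on $\pi_i=1-(1-p_i)^m$ by elementary inequalities in the single variable $x=mp_i\in(0,c]$, and then deduce the bound on $a_i/\pi_i$ by chaining it with the definition of $\rho$.

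\textbf{Upper bound.} By Bernoulli's inequality, $(1-p_i)^m\ge 1-mp_i$ for $p_i\in(0,1]$ and $m\ge1$. Hence $\pi_i\le mp_i$. Equivalently, this is the union bound over the $m$ draws.

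\textbf{Lower bound.} Since $1-p_i\le e^{-p_i}$, we get $(1-p_i)^m\le e^{-mp_i}$, and so $\pi_i\ge 1-e^{-x}$ with $x=mp_i$. The function $g(x)=(1-e^{-x})/x$ is decreasing on $(0,\infty)$. Its derivative has numerator $xe^{-x}-(1-e^{-x})=e^{-x}(1+x-e^{x})\le 0$. Because $x=mp_i\le c$ by the definition of $c$ (and $c<1$ from the budget rule, though only $x\le c$ matters), monotonicity gives $g(x)\ge g(c)=\kappa(c)$. Therefore $1-e^{-x}\ge \kappa(c)x$, i.e.\ $\pi_i\ge\kappa(c)mp_i$.

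\textbf{Consequence.} For $i\in\mathcal R$, the definition of $\rho$ gives $a_i=A_{\mathcal R}\bar a_i\le A_{\mathcal R}\rho\,p_i$. Dividing by the lower bound $\pi_i\ge\kappa(c)mp_i>0$ yields
\[
\frac{a_i}{\pi_i}\le \frac{A_{\mathcal R}\rho\,p_i}{\kappa(c)mp_i}=\frac{A_{\mathcal R}\rho}{\kappa(c)m}.
\]

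The only step needing care is the monotonicity of $g$ together with the observation that each $x=mp_i$ is at most $c$. The rest is standard.
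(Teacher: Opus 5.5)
Your proposal is correct and follows the paper's proof: the upper bound (you use Bernoulli's inequality, equivalently the union bound the paper uses), the lower bound via $1-p_i\le e^{-p_i}$ and the monotonicity of $g(x)=(1-e^{-x})/x$ evaluated at $mp_i\le c$, and the final chaining with $a_i\le A_{\mathcal R}\rho\,p_i$ all match. Your explicit derivative check for $g$ fills in a step the paper only states.
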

\noindent
Now, we are ready to use the lemma
to bound SSA's approximation error.
\begin{theorem}
\label{thm:ssa-hoeffding}
For any $\delta\in(0,1)$, let
$\epsilon_D(\delta)
:=
\frac{2\sqrt{2}\,A_{\mathcal R}\rho}{\kappa(c)}
\sqrt{\frac{\log(4/\delta)}{m}}$.
If $\epsilon_D(\delta)<1$, then, with probability at least
$1-\delta$ over the residual sampling,
\begin{equation}
\left\|
\widehat{\mathbf o}-\mathbf o
\right\|_2
\leq
\frac{
4\sqrt{2}\,A_{\mathcal R}\rho
}{
\kappa(c)\bigl(1-\epsilon_D(\delta)\bigr)
}
\sqrt{\frac{\log(4/\delta)}{m}}\,
\|\mathbf V\|_2.
\label{eq:ssa-hoeffding}
\end{equation}
\end{theorem}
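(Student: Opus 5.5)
The plan is to reduce the ratio estimator to two linear (Horvitz--Thompson type) deviations, one scalar for the denominator and one vector-valued for the numerator. Each will be controlled by a bounded-differences inequality over the $m$ independent draws $J_1,\dots,J_m$, with Lemma~\ref{lem:deduplication} supplying the per-term bound.

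\textbf{Step 1: rescale and rewrite the error.} Divide numerator and denominator of $\widehat{\mathbf o}$ by $W=\sum_j w_j$. With $A_{\mathcal H}=\sum_{i\in\mathcal H}a_i$, this gives
\[
\widehat D:=A_{\mathcal H}+\sum_{i\in\mathcal R}\frac{Z_i}{\pi_i}a_i,\qquad
\widehat{\mathbf N}:=\sum_{i\in\mathcal H}a_i\mathbf v_i+\sum_{i\in\mathcal R}\frac{Z_i}{\pi_i}a_i\mathbf v_i .
\]
Both are unbiased for $1$ and $\mathbf o$ respectively, since $\mathbb E Z_i=\pi_i$. Define $e_i:=(Z_i/\pi_i-1)a_i$ for $i\in\mathcal R$ and $e_i=0$ otherwise. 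Then $\widehat{\mathbf N}-\mathbf o=\mathbf V^\top e$ and $\widehat D-1=\mathbf 1^\top e$, so
\[
\widehat{\mathbf o}-\mathbf o=\frac{(\widehat{\mathbf N}-\mathbf o)-(\widehat D-1)\mathbf o}{\widehat D}.
\]
Using $\|\mathbf o\|_2=\|\mathbf V^\top\mathbf a\|_2\le\|\mathbf V\|_2\|\mathbf a\|_2\le\|\mathbf V\|_2$, we obtain
\[
\|\widehat{\mathbf o}-\mathbf o\|_2\le\frac{\|\mathbf V\|_2\bigl(\|e\|_2+|\mathbf 1^\top e|\bigr)}{\widehat D}.
\]
It therefore suffices to prove two bounds, each on an event of probability at least $1-\delta/2$:
\[
|\mathbf 1^\top e|\le\epsilon_D(\delta),\qquad \|e\|_2\le\frac{2\sqrt2A_{\mathcal R}\rho}{\kappa(c)}\sqrt{\frac{\log(4/\delta)}{m}} .
\]
On the intersection of the two events, $\widehat D\ge1-\epsilon_D(\delta)>0$, and the sum of the two bounds in the numerator is $4\sqrt2A_{\mathcal R}\rho\,\kappa(c)^{-1}\sqrt{\log(4/\delta)/m}$. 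A union bound then yields \eqref{eq:ssa-hoeffding}, with $\log(4/\delta)$ arising from splitting $\delta$ into two halves.

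\textbf{Step 2: concentration via bounded differences.} The indicators $Z_i$ are dependent, since they are occupancy indicators of a multinomial. This rules out Hoeffding for independent summands, and I expect it to be the main obstacle. To get around it, I would regard $e$ as a function of the i.i.d.\ draws $J_1,\dots,J_m$. Replacing one draw changes at most two indicators $Z_i$: one token may leave $\mathcal S$ and another may enter. Each such change moves the corresponding coordinate of $e$ by $a_i/\pi_i\le A_{\mathcal R}\rho/(\kappa(c)m)$, by Lemma~\ref{lem:deduplication}. Hence both $\mathbf 1^\top e$ and $\|e\|_2$ have bounded differences of at most $2A_{\mathcal R}\rho/(\kappa(c)m)$.

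McDiarmid's inequality, applied with failure probability $\delta/2$ to the scalar $\mathbf 1^\top e$ (which has mean zero), gives
\[
|\mathbf 1^\top e|\le\frac{\sqrt2A_{\mathcal R}\rho}{\kappa(c)}\sqrt{\frac{\log(4/\delta)}{m}}\le\epsilon_D(\delta).
\]
For $\|e\|_2$, the one-sided McDiarmid bound gives deviation above $\mathbb E\|e\|_2$ of at most the same amount, with failure probability $\delta/2$.

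\textbf{Step 3: bounding $\mathbb E\|e\|_2$.} By Jensen, $\mathbb E\|e\|_2\le(\mathbb E\|e\|_2^2)^{1/2}$, and
\[
\mathbb E\|e\|_2^2=\sum_{i\in\mathcal R}\frac{1-\pi_i}{\pi_i}a_i^2+\sum_{i\neq j}\frac{a_ia_j}{\pi_i\pi_j}\operatorname{Cov}(Z_i,Z_j).
\]
The covariances are nonpositive, because multinomial occupancy indicators are negatively associated. I would verify this directly: the probability that both $i$ and $j$ are missed is $(1-p_i-p_j)^m\ge(1-p_i)^m(1-p_j)^m$, which is exactly $\operatorname{Cov}(Z_i,Z_j)\le0$. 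Dropping the cross terms leaves
\[
\mathbb E\|e\|_2^2\le\max_{i\in\mathcal R}\frac{a_i}{\pi_i}\,A_{\mathcal R}\le\frac{A_{\mathcal R}^2\rho}{\kappa(c)m}\le\Bigl(\frac{A_{\mathcal R}\rho}{\kappa(c)}\Bigr)^2\frac1m,
\]
using $\rho\ge1$ and $\kappa(c)\le1$. Since $\log(4/\delta)\ge\log4>1/2$, we have
\[
\frac{A_{\mathcal R}\rho}{\kappa(c)\sqrt m}\le\frac{\sqrt2A_{\mathcal R}\rho}{\kappa(c)}\sqrt{\frac{\log(4/\delta)}{m}} .
\]
Adding this to the McDiarmid deviation from Step 2 gives the claimed $\|e\|_2$ bound, which completes the two ingredients of Step 1.
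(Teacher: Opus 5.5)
Your argument is correct, but it follows a different route from the paper. The paper works directly with the centered vector $\mathbf E=\sum_{i\in\mathcal R}a_i(Z_i/\pi_i-1)(\mathbf v_i-\mathbf o)$, which is exactly your $\mathbf V^\top e-(\mathbf 1^\top e)\mathbf o$. It concentrates this vector with a dimension-free vector-valued bounded-differences inequality derived from Pinelis's martingale bound in Hilbert space. The per-draw sensitivity there is $4\beta\|\mathbf V\|_2$, with $\beta=A_{\mathcal R}\rho/(\kappa(c)m)$, obtained from $\|\mathbf v_i-\mathbf o\|_2\le 2\|\mathbf V\|_2$; the same inequality with $r=1$ handles the denominator.

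You instead split off $(\mathbf 1^\top e)\mathbf o$ and use $\|\mathbf V^\top e\|_2\le\|\mathbf V\|_2\|e\|_2$. This reduces everything to scalar McDiarmid, applied to $\mathbf 1^\top e$ and to $\|e\|_2$ about its mean, together with the second-moment bound $\mathbb E\|e\|_2^2\le\beta A_{\mathcal R}$. Your route is more elementary. McDiarmid's exponent $2t^2/\sum_\ell c_\ell^2$ is four times better than the $t^2/(2\sum_\ell b_\ell^2)$ of the paper's lemma, and that is what leaves room to absorb $\mathbb E\|e\|_2$ and still reach the stated constants. Indeed, your denominator bound is $\epsilon_D(\delta)/2$.

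The price is that $\|\mathbf V^\top e\|_2\le\|\mathbf V\|_2\|e\|_2$ genuinely uses the spectral norm. So the paper's remark that $\|\mathbf V\|_2$ may be replaced by $\max_i\|\mathbf v_i\|_2$ holds for its proof but not for yours.

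There are two harmless slips in Step 3. First, $\|e\|_2^2=\sum_i e_i^2$ has no cross terms, so $\mathbb E\|e\|_2^2=\sum_{i\in\mathcal R}a_i^2(1-\pi_i)/\pi_i$ exactly. The covariance expansion you wrote is that of $\mathbb E[(\mathbf 1^\top e)^2]$, and negative association is not needed at all. Second, your verification of negative association has the inequality reversed. Since $0\le 1-p_i-p_j\le(1-p_i)(1-p_j)$, one gets $(1-p_i-p_j)^m\le(1-p_i)^m(1-p_j)^m$, and it is this direction that yields $\operatorname{Cov}(Z_i,Z_j)\le 0$.
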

\noindent
Since $\kappa(c)>1-e^{-1}$, deduplication contributes only a universal
constant factor to the approximation and sample-complexity bounds. The
following corollary gives a direct $(\varepsilon,\delta)$ guarantee and a
sufficient sample size $m
=
\Ocal\left(
A_{\mathcal R}^{2}\rho^{2}\varepsilon^{-2}
\log\frac{1}{\delta}
\right)$ that automatically ensures
$\epsilon_D(\delta)<1$.

\begin{corollary}
\label{cor:ssa-sample-complexity}
Consider Algorithm~\ref{alg:ssa} with an integer budget
$1\le k<N$ and finite attention and proxy scores. Let
$\varepsilon,\delta\in(0,1)$. If
$m
\geq
21A_{\mathcal R}^{2}\rho^{2}
\frac{(2+\varepsilon)^2}{\varepsilon^2}
\log\frac{4}{\delta}$,
then, with probability at least $1-\delta$,
\begin{equation}
\left\|
\widehat{\mathbf o}-\mathbf o
\right\|_2
\leq
\varepsilon\|\mathbf V\|_2.
\label{eq:ssa-epsilon-delta}
\end{equation}
\end{corollary}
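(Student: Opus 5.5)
The plan is to derive the corollary directly from Theorem~\ref{thm:ssa-hoeffding}. We pick the sample-size condition so that two things hold at once: $\epsilon_D(\delta)$ stays below a fixed threshold, and the right-hand side of \eqref{eq:ssa-hoeffding} becomes at most $\varepsilon\|\mathbf V\|_2$.

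\textbf{Setup.} Write
\[
t:=\frac{2\sqrt{2}\,A_{\mathcal R}\rho}{\kappa(c)}\sqrt{\frac{\log(4/\delta)}{m}},
\]
so that $\epsilon_D(\delta)=t$. The bound in \eqref{eq:ssa-hoeffding} then reads $\|\widehat{\mathbf o}-\mathbf o\|_2\le \frac{2t}{1-t}\|\mathbf V\|_2$. Since $t\mapsto 2t/(1-t)$ is increasing on $[0,1)$, we have
\[
\frac{2t}{1-t}\le\varepsilon \iff t\le \frac{\varepsilon}{2+\varepsilon}.
\]
In particular, $t\le \varepsilon/(2+\varepsilon)<1$ automatically gives the hypothesis $\epsilon_D(\delta)<1$ of the theorem. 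So it suffices to show that the stated lower bound on $m$ forces $t\le \varepsilon/(2+\varepsilon)$.

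\textbf{Bounding $\kappa(c)$ from below.} Squaring, the target inequality becomes
\[
m\ge \frac{8A_{\mathcal R}^2\rho^2}{\kappa(c)^2}\,\frac{(2+\varepsilon)^2}{\varepsilon^2}\log\frac4\delta .
\]
We therefore need a universal bound $8/\kappa(c)^2\le 21$.

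\begin{itemize}
\item The function $\kappa(c)=(1-e^{-c})/c$ is decreasing on $(0,1]$. This follows because its numerator is concave with value $0$ at $c=0$, so the chord slope decreases.
\item We know $c<1$: the paper established $mp_i=\lambda q_i<1$ for all $i\in\mathcal R$, and $\mathcal R$ is finite and nonempty because $k<N$, so $m\ge1$.
\item Hence $\kappa(c)>\kappa(1)=1-e^{-1}\approx0.632$, and $\kappa(c)^2>0.3995$.
\item This gives $8/\kappa(c)^2<8/0.3995\approx20.03\le 21$.
\end{itemize}

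\textbf{Conclusion.} The assumed condition on $m$ implies the squared inequality above, hence $t\le\varepsilon/(2+\varepsilon)$. The theorem then applies with probability at least $1-\delta$ and yields \eqref{eq:ssa-epsilon-delta}.

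One side point concerns the theorem's standing assumptions. Finite attention and proxy scores ensure $p_i>0$, so $\rho$ is well defined and Lemma~\ref{lem:deduplication} holds.

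The only mildly delicate step is verifying the numeric constant, namely that $8/(1-e^{-1})^2\le 21$, together with the monotonicity of $\kappa$. Everything else is algebraic rearrangement.
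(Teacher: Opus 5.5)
Your proposal is correct and follows the paper's proof essentially step for step. You set $x=\epsilon_D(\delta)$, use $c<1$ and the monotonicity of $\kappa$ to get $8/\kappa(c)^2<8/(1-e^{-1})^2<21$, and deduce $\epsilon_D(\delta)\le\varepsilon/(2+\varepsilon)<1$; you then observe that the theorem's bound equals $\frac{2x}{1-x}\|\mathbf V\|_2\le\varepsilon\|\mathbf V\|_2$, and your explicit monotonicity checks only spell out details the paper leaves implicit.
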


\begin{remark}

Note that the approximation error in Theorem~\ref{thm:ssa-hoeffding} scales as
$\nicefrac{A_{\mathcal R}\rho}{\sqrt m}$.
The residual mass $A_{\mathcal R}$ decreases when the retained heavy tokens
capture more of the true attention mass, while $\rho$ increases when the proxy
underestimates an important residual token. Increasing the residual budget
$m$ reduces the stochastic error at the standard Monte Carlo rate
$m^{-1/2}$. For a fixed total budget, SSA therefore balances the reduction in
$A_{\mathcal R}$ obtained from retaining more heavy tokens against the larger
sampling budget obtained from retaining fewer of them.
\end{remark}
\begin{remark}
In both Theorem~\ref{thm:ssa-hoeffding} and Corollary~\ref{cor:ssa-sample-complexity}, we express the bounds using $\|\Vb\|_2$. The proofs only require
a bound on the individual value-vector norms, so the same results
hold with $\max_{i\in[N]}\|\mathbf v_i\|_2$ in its place, without
changing the constants.
\end{remark}

\subsection{Kernel Design}
\label{sec:kernel}

At long contexts, decode attention is memory-bandwidth bound because each step reads the entire KV cache from HBM~\citep{shah2024flashattention, kwon2023efficient}. SSA reduces this memory traffic by selecting a sparse subset of tokens before loading their KV states. While Algorithm~\ref{alg:ssa} determines the division between the heavy and residual strata adaptively, our optimized GPU implementation uses a fixed heavy fraction. Developing an equally efficient implementation of the adaptive algorithm remains future work. As shown in Table~\ref{tab:ssa_theory_practical}, the practical fixed-budget variant closely matches the performance of the theoretical formulation. The residual proposal, inclusion-probability correction, and resulting attention estimator remain unchanged. Our SSA decode step consists of four stages, where only the first three scan the full context using a few bytes per token.
\\
\\
\noindent
\textbf{Proxy scoring:} Each key is represented by a 16-byte PQ code (16 subspaces, 256 centroids)~\citep{pqcache2025}, compared to 1\,KiB for its fp16 key-value pair at head dimension 256. For each query, a lookup table computes the proxy score $\tilde{s}_i$ for every token. Scores are scaled and masked for sink and local tokens within the same kernel, avoiding an additional pass.
\\
\\
\noindent
\textbf{Heavy-set selection:} We select $\mathcal{H}$ using radix top-$k$~\citep{radixtopk}, which returns an unsorted index set since the estimator in eq.~\ref{eq:importance-attention} is invariant to the ordering of selected tokens.
\\
\\
\noindent
\textbf{Residual sampling:} Direct sampling from the full residual distribution becomes expensive at long context lengths, as constructing and sampling from the distribution requires processing probabilities across the entire residual KV cache. To reduce this overhead, we use a hierarchical sampler that progressively narrows the search space. Each draw first selects a 1024-token chunk, then a 32-token sub-tile, and finally scans at most 32 tokens to select a key. We remove duplicate samples with a per-row hash table and reweight each unique sample by $1/\pi_i$. In practice, we clamp $\pi_i$ at $10^{-4}$ to avoid excessively large weights, at the cost of introducing a small bias. Finally, we use a counter-based random number generator to obtain fresh samples across replays while retaining compatibility with CUDA graph capture.
\\
\\
\noindent
\textbf{Sparse decode:} The final stage gathers the keys and values in $\mathcal{H}\cup\mathcal{S}$ from the paged KV cache and evaluates eq.~\ref{eq:importance-attention} using an online softmax. To exploit more parallelism, we divide each query head across multiple thread blocks and combine their partial softmax results using a numerically stable log-sum-exp reduction. At 1M context and batch size~1, using 64 splits is 1.58-1.71$\times$ faster than using 32 splits across sparsity levels. Increasing the number of splits to 128 or 256, however, reduces performance because the additional parallelism no longer offsets the increased launch and merge overhead.

\section{Experiments}
\label{sec:exp}

We conduct an extensive evaluation of SSA against Oracle-top-$k$, deterministic top-$k$, pure sampling, and hybrid baselines across multiple datasets, models, context lengths, and generation lengths. In addition, note that we slightly overload the notation $h$. In Algorithm~\ref{alg:ssa} and the theoretical analysis, $h=|\mathcal H|$ denotes the number of heavy tokens, whereas in the implementation and experimental discussion, $h\in[0,1]$ denotes the fraction of the total budget assigned to the heavy set. We describe the evaluation setup below.

\subsection{Setup}
\label{sec:exp-setup}

\textbf{Baselines:} We compare SSA against three representative sparse-attention baselines: vAttention~\citep{desai2025vattention}, PQCache~\citep{pqcache2025}, and Oracle-top-$k$. Oracle-top-$k$ performs exact attention-score computation before selecting the top-$k$ tokens and is therefore impractical for acceleration; we include it as an upper-bound reference for deterministic top-$k$ selection. For all methods, we use 128 sink tokens and 128 local tokens, along with the hyperparameters recommended by the respective authors. Additional comparisons with MagicPig~\citep{chen2024magicpig}, HashAttention~\citep{desai2025hashattention}, Double Sparsity~\citep{yang2024double}, and Quest~\citep{pmlr-v235-tang24l} are provided in Table~\ref{tab:ruler_hard_32k_all_models} in the Appendix. These results motivate our use of PQCache, the strongest approximate top-$k$ baseline in this comparison, to construct SSA's proposal distribution throughout our experiments.
\\
\\
\noindent
\textbf{Datasets and Models:} We evaluate SSA on four benchmark suites: RULER (32K-128K)~\citep{hsieh2024ruler}, LongBench (10K-16K)~\citep{bai2024longbench}, LOFT (32K-128K)~\citep{lee2024longcontext}, and AIME (32K token generation)~\citep{dekoninck2026matharena}. Our evaluation spans five models: Llama-3.1-8B-Instruct~\citep{llama31}, Qwen3.5-4B and Qwen3.5-9B~\citep{qwen35blog}, and Ministral-3-3B-Instruct-2517 and Ministral-3-8B-Instruct-2517~\citep{liu2026ministral3}. Dataset subsets and additional evaluation details are provided in Table~\ref{tab:datasets_info} in Appendix.

\begin{table}[t]
\centering
\caption{\textbf{Kernel optimizations make SSA faster than dense decoding even at batch size 1.} We report decode-time attention-layer latency at 1M context length on a single H200 GPU. \emph{Reference} implements the same algorithm without the kernel optimizations described in Section~\ref{sec:kernel}. At batch size 1, it reaches only 0.37-0.47$\times$ the speed of FlashInfer and surpasses FlashInfer only at batch size 8. In contrast, optimized SSA is faster than FlashInfer across all measured batch sizes and sparsity levels. FlashInfer latencies across the two runs agree within 1.2\%.}
\label{tab:decode_1m}
\setlength{\tabcolsep}{5pt}
\begin{tabular}{llrrrcc}
\toprule
& & \multicolumn{3}{c}{\textbf{Latency (ms)}} & \multicolumn{2}{c}{\textbf{SSA speedup}} \\
\cmidrule(lr){3-5}\cmidrule(lr){6-7}
\textbf{Sparsity} & \textbf{Batch} & \textbf{FlashInfer} & \textbf{Reference} & \textbf{SSA}
& \textbf{vs.\ FlashInfer} & \textbf{vs.\ reference} \\
\midrule
\multirow{3}{*}{50$\times$} & 1  & 1.41  & 2.99  & 0.79  & 1.79$\times$ & 3.80$\times$ \\
                            & 8  & 11.90 & 7.18  & 4.27  & 2.78$\times$ & 1.68$\times$ \\
                            & 16 & 24.47 & 12.57 & 8.26  & 2.96$\times$ & 1.52$\times$ \\
\midrule
\multirow{3}{*}{20$\times$} & 1  & 1.41  & 3.34  & 1.01  & 1.40$\times$ & 3.32$\times$ \\
                            & 8  & 11.91 & 8.74  & 5.76  & 2.07$\times$ & 1.52$\times$ \\
                            & 16 & 24.53 & 15.51 & 11.32 & 2.17$\times$ & 1.37$\times$ \\
\midrule
\multirow{3}{*}{10$\times$} & 1  & 1.42  & 3.83  & 1.30  & 1.10$\times$ & 2.95$\times$ \\
                            & 8  & 12.01 & 11.06 & 7.91  & 1.52$\times$ & 1.40$\times$ \\
                            & 16 & 24.55 & 19.82 & 15.81 & 1.55$\times$ & 1.25$\times$ \\
\bottomrule
\end{tabular}
\end{table}

\subsection{Effectiveness of SSA}
\label{sec:exp-accuracy}

\textbf{RULER-HARD:} Table~\ref{tab:ruler_hard_main} reports average RULER-HARD scores at 50$\times$ attention sparsity. SSA consistently outperforms both PQ and vAttention(PQ) across all models and context lengths. Since these methods use the same PQ scores and attention budget, the comparison highlights the benefit of allocating part of the budget to score-guided sampling rather than exclusively to top-$k$ retrieval or uniform sampling. The gains are particularly pronounced when deterministic top-$k$ struggles. At 32K, SSA improves Llama-3.1-8B from 72.51 (PQ) to 83.67, approaching the dense score of 88.76. Similarly, on Ministral-3-3B, SSA improves the score from 70.81 to 82.05, compared with 86.69 for dense attention. Oracle-top-$k$, which selects tokens using exact attention scores, remains stronger at 128K for Llama-3.1-8B and Ministral-3-3B, and across all context lengths for Ministral-3-8B. On Qwen3.5-4B and Qwen3.5-9B, all sparse methods remain within two points of dense attention.
\\
\\
\noindent
\textbf{AIME 2025:} Table~\ref{tab:aime_and_lb} reports avg@16 accuracy for long-form generation. SSA remains robust as sparsity increases, with particularly large gains in the aggressive-sparsity regime. On Qwen3.5-9B at 50$\times$ sparsity, SSA achieves 59.8, compared with 48.3 for oracle top-$k$, 36.7 for PQ, and 35.8 for vAttention(PQ). At 100$\times$ sparsity, SSA retains an accuracy of 51.5, while vAttention(PQ) drops to 3.5. At lower sparsity, vAttention(PQ) performs better at 5$\times$ and at 10$\times$ on Qwen3.5-4B. We further study the effect of the heavy fraction $h$ in Table~\ref{tab:aime_heavy_size}. Hybrid allocations with $h\in[0.2,0.8]$ achieve 61.0-64.0, substantially outperforming both pure sampling ($h=0$), which achieves 40.2-42.5, and pure top-$k$ ($h=1$), which achieves 49.4-53.8. These results demonstrate the benefit of combining deterministic retrieval with score-guided importance sampling, particularly under aggressive sparsity.
\\
\\
\noindent
\textbf{LOFT and LongBench:} On LOFT with Qwen3.5-9B and Qwen3.5-4B
(see fig.~\ref{fig:qwen3.5_error}), SSA has the lowest attention error among the non-oracle methods at every context length and sparsity level, while subspan-EM varies by at most 0.04 across methods (see fig.~\ref{fig:loft_results}). On LongBench with Qwen3.5-27B
(Table~\ref{tab:aime_and_lb}), all sparse methods stay within
one point of dense attention at both 50$\times$ and 20$\times$ sparsity.

\begin{table}[t]
\centering

\begin{minipage}[t]{0.7\linewidth}
    \centering
    
    \resizebox{\linewidth}{!}{%
    \begin{tabular}{c|cccc|cccc}
        \toprule
        & \multicolumn{4}{c|}{\textbf{Qwen3.5-9B}}
        & \multicolumn{4}{c}{\textbf{Qwen3.5-4B}} \\
        \textbf{Sparsity}
        & \textbf{Oracle-top-$k$} & \textbf{PQ} & \textbf{vAttn(PQ)} & \textbf{SSA}
        & \textbf{Oracle-top-$k$} & \textbf{PQ} & \textbf{vAttn(PQ)} & \textbf{SSA} \\
        \midrule
        Dense
        & \multicolumn{4}{c|}{66.3}
        & \multicolumn{4}{c}{54.2} \\
        \midrule
        5$\times$
        & 63.8 & 63.5 & \textbf{65.4} & 61.0
        & 49.8 & 49.4 & \textbf{54.4} & 52.7 \\

        10$\times$
        & 59.4 & 53.8 & 62.7 & \textbf{64.0}
        & 47.9 & 41.5 & \textbf{51.9} & 51.0 \\

        20$\times$
        & 55.4 & 49.4 & 58.3 & \textbf{63.5}
        & 42.7 & 33.8 & 50.2 & \textbf{53.1} \\

        50$\times$
        & 48.3 & 36.7 & 35.8 & \textbf{59.8}
        & 36.5 & 25.4 & 27.5 & \textbf{50.0} \\

        100$\times$
        & 42.5 & 28.1 & 3.5 & \textbf{51.5}
        & 31.9 & 13.1 & 1.7 & \textbf{42.5} \\
        \bottomrule
    \end{tabular}%
    }
\end{minipage}
\hfill
\begin{minipage}[t]{0.28\linewidth}
    \centering
    \resizebox{\linewidth}{!}{%
    \begin{tabular}{lcc}
        \toprule
        \textbf{Method} & \textbf{50$\times$} & \textbf{20$\times$} \\
        \midrule
        Dense          & 41.56 & 41.56 \\
        \midrule
        PQ             & 40.60 & 40.98 \\
        vAttn (PQ)     & 40.39 & 41.47 \\
        SSA            & 40.92 & \textbf{41.56} \\
        Oracle-top-$k$ & \textbf{41.25} & 41.40 \\
        \bottomrule
    \end{tabular}%
    }
\end{minipage}
\caption{AIME 2025 avg@16 accuracy (left). LongBench performance with Qwen3.5-27B (right).}
\label{tab:aime_and_lb}
\end{table}

\subsection{Decode Efficiency}
\label{sec:exp-efficiency}

Sparse attention accelerates decoding only when its selection overhead is smaller than the memory traffic it saves. We evaluate this trade-off for SSA across context lengths and batch sizes, and separately measure the sampling overhead absent in top-$k$ methods.


\begin{figure}[t]
    \centering
    \includegraphics[width=\linewidth]{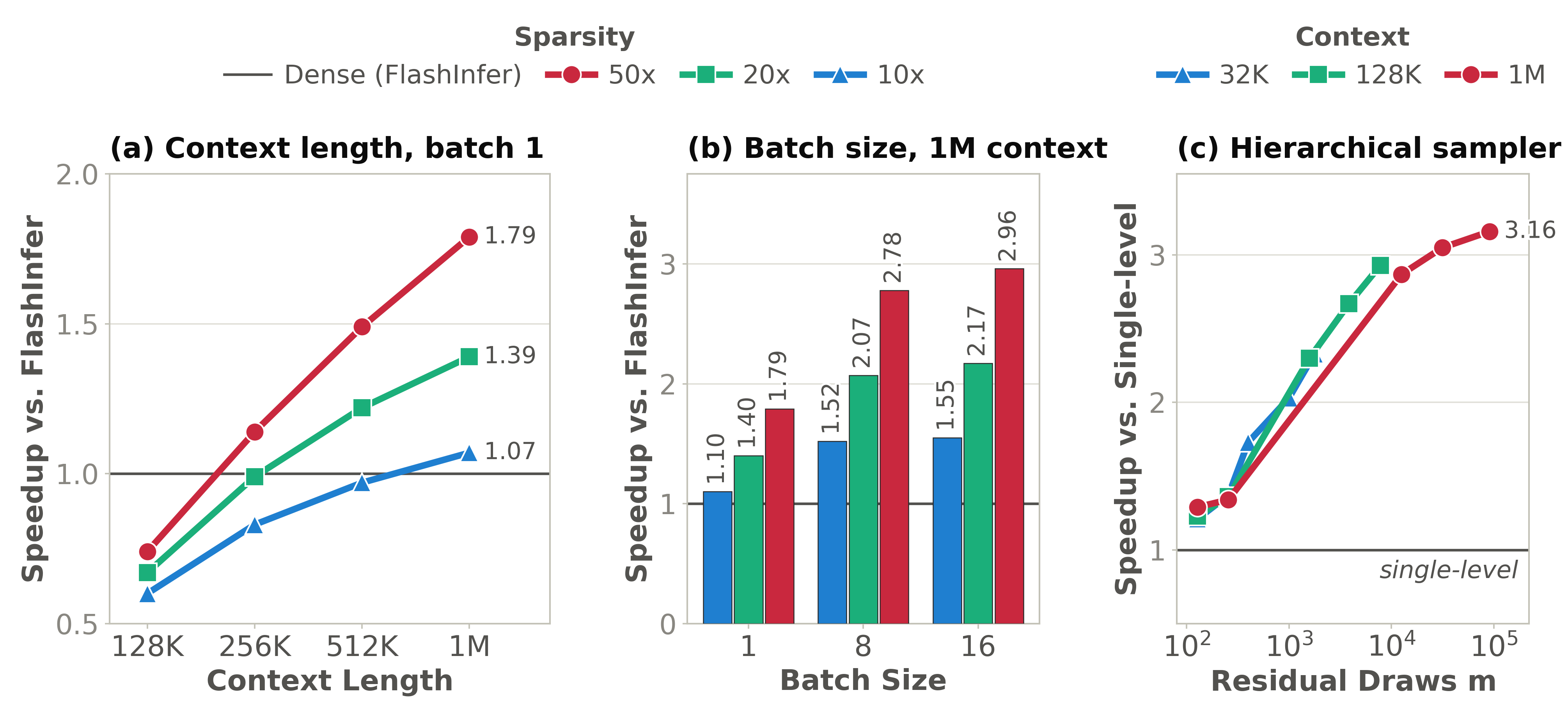}
    \caption{\textbf{SSA accelerates decode at long contexts, with larger gains at higher batch sizes.} Decode attention speedup on a single H200 (16 query heads, 4 KV heads, head dimension 256, fp16 KV). \textbf{(a)} At batch size 1, SSA overtakes FlashInfer beyond 256K context and reaches 1.79$\times$ speedup at 1M and 50$\times$ sparsity. \textbf{(b)} At 1M context, speedup increases with batch size, reaching 2.96$\times$ at batch 16 and 50$\times$ sparsity. \textbf{(c)} Overhead of two-level residual sampling relative to 1,024-token chunk-only sampling as the number of draws $m$ increases.}
    \label{fig:decode_efficiency}
\end{figure}

\noindent
\textbf{Efficiency analysis:} SSA becomes increasingly advantageous as context length and batch size grow. At batch size 1, SSA is slower than FlashInfer at 128K, achieving approximately 0.60-0.74$\times$ its speed. As context length increases, this overhead is increasingly amortized, and SSA reaches 1.79$\times$, 1.39$\times$, and 1.07$\times$ speedups at 1M for 50$\times$, 20$\times$, and 10$\times$ sparsity, respectively (see fig.~\ref{fig:decode_efficiency}a). The gains further increase with batch size, reaching 2.96$\times$ at batch 16 for 50$\times$ sparsity (see fig.~\ref{fig:decode_efficiency}b); our kernel optimizations are particularly important at batch 1, improving SSA by 2.95-3.80$\times$ over the reference implementation (Table~\ref{tab:decode_1m}). Finally, although residual sampling introduces overhead absent from top-$k$ methods, our two-level sampler reduces this cost by 1.21-3.16$\times$ over chunk-only routing as the number of draws $m$ increases (see fig.~\ref{fig:decode_efficiency}c), keeping sampling latency at 0.30-1.18\,ms at 1M context and $h=0.4$ across the evaluated sparsity levels.

\begin{figure}[H]
    \centering
    \includegraphics[width=0.9\linewidth]{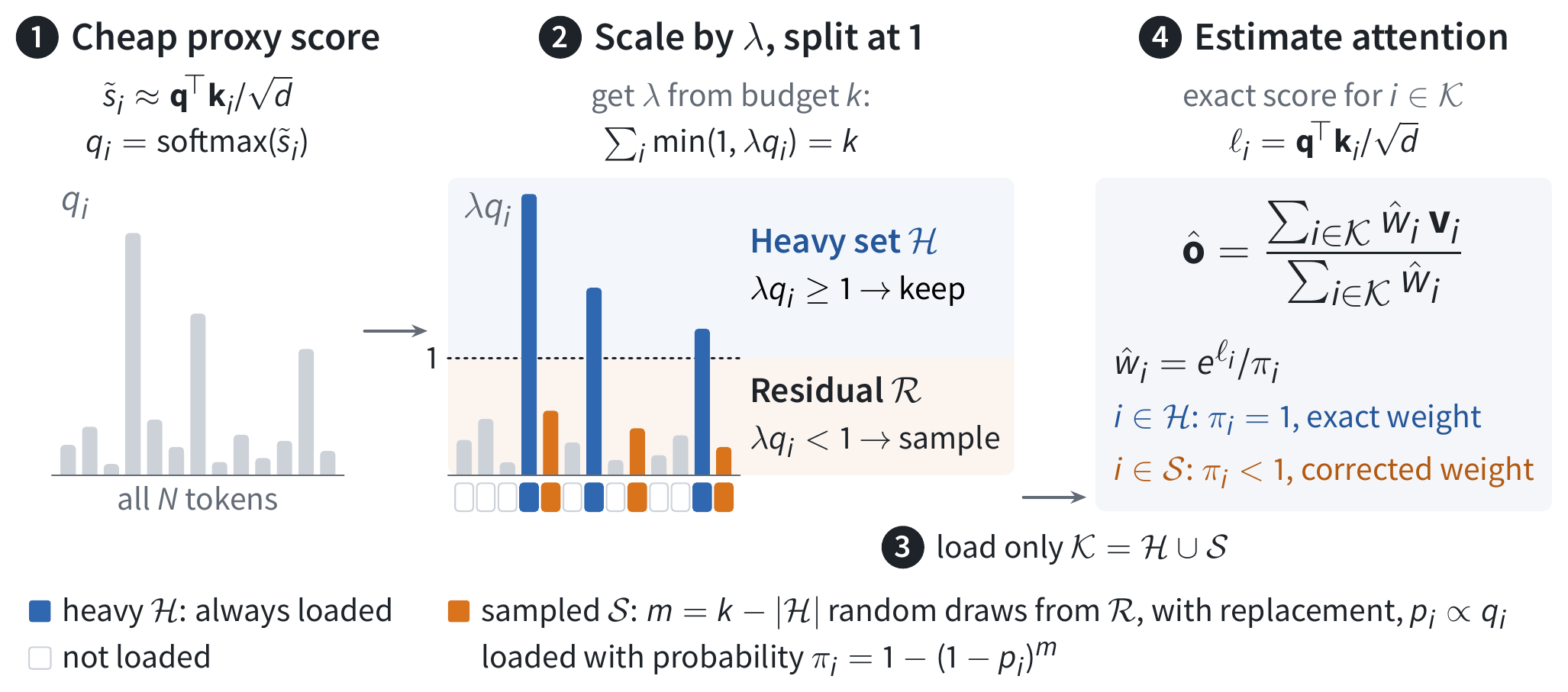}
    \caption{Illustration of Stratified Sparse Attention algorithm.}
    \label{fig:placeholder}
\end{figure}

\section{Conclusion}
We introduce Stratified Sparse Attention (SSA), a unified framework encompassing top-$k$, pure sampling, and hybrid sparse attention while substantially outperforming existing methods and reducing the accuracy gap to dense attention. SSA is compatible with any approximate top-$k$ method that provides proxy scores, allowing it to directly benefit from advances in approximate retrieval. An important direction for future work is to make SSA verifiable by incorporating user-controlled guarantees on approximation quality.

\section*{Acknowledgments}
The work was primarily supported by Rice Ken Kennedy Institute (K2I) Generative AI Cluster Funding. The Sky Computing Lab is supported by gifts from Accenture, Algorithmic SuperIntelligence Labs, Amazon, AMD, Anyscale, Broadcom, cmpnd, Google, IBM, Intel, Intesa Sanpaolo, Lambda, Lightspeed, Mirendil, NVIDIA, Samsung SDS, and VESSL.

\setlength{\bibsep}{6pt}
\bibliographystyle{abbrvnat}
{
\bibliography{bibliography.bib}

\begin{thebibliography}{50}
\providecommand{\natexlab}[1]{#1}
\providecommand{\url}[1]{\texttt{#1}}
\expandafter\ifx\csname urlstyle\endcsname\relax
  \providecommand{\doi}[1]{doi: #1}\else
  \providecommand{\doi}{doi: \begingroup \urlstyle{rm}\Url}\fi

\bibitem[Ainslie et~al.(2023)Ainslie, Lee-Thorp, de~Jong, Zemlyanskiy, Lebron, and Sanghai]{ainslie2023gqa}
J.~Ainslie, J.~Lee-Thorp, M.~de~Jong, Y.~Zemlyanskiy, F.~Lebron, and S.~Sanghai.
\newblock {GQA: Training Generalized Multi-Query Transformer Models from Multi-Head Checkpoints}.
\newblock In \emph{The 2023 Conference on Empirical Methods in Natural Language Processing}, 2023.

\bibitem[Bai et~al.(2024)Bai, Lv, Zhang, Lyu, Tang, Huang, Du, Liu, Zeng, Hou, Dong, Tang, and Li]{bai2024longbench}
Y.~Bai, X.~Lv, J.~Zhang, H.~Lyu, J.~Tang, Z.~Huang, Z.~Du, X.~Liu, A.~Zeng, L.~Hou, Y.~Dong, J.~Tang, and J.~Li.
\newblock {LongBench: A Bilingual, Multitask Benchmark for Long Context Understanding}.
\newblock In \emph{Proceedings of the 62nd Annual Meeting of the Association for Computational Linguistics}, Aug. 2024.

\bibitem[Cao et~al.(2026)Cao, Yin, Dhingra, and Zhou]{cao2026coding}
W.~Cao, X.~Yin, B.~Dhingra, and S.~Zhou.
\newblock {Coding Agents are Effective Long-Context Processors}.
\newblock \emph{arXiv preprint arXiv:2603.20432}, 2026.

\bibitem[Chen et~al.(2026)Chen, Wang, and Qu]{chen2026horizon}
M.~Chen, L.~Wang, and B.~Qu.
\newblock {The Horizon Gap: Planning, Memory, Execution, Training, and Evaluation for Long-Horizon LLM Agents}.
\newblock \emph{arXiv preprint arXiv:2608.06663}, 2026.

\bibitem[Chen et~al.(2024)Chen, Hu, Huang, Jiang, Ji, Jiang, Jiang, Liu, Liu, Li, Lian, Meng, Peng, Sun, Shi, Wang, Wang, Wang, Wang, Xuan, Xia, Yang, Yang, Zhang, Zhou, and Zhang]{chen2024deep}
X.~Chen, X.~Hu, Y.~Huang, H.~Jiang, W.~Ji, Y.~Jiang, Y.~Jiang, B.~Liu, H.~Liu, X.~Li, X.~Lian, G.~Meng, X.~Peng, H.~Sun, L.~Shi, B.~Wang, C.~Wang, J.~Wang, T.~Wang, J.~Xuan, X.~Xia, Y.~Yang, Y.~Yang, L.~Zhang, Y.~Zhou, and L.~Zhang.
\newblock Deep learning-based software engineering: Progress, challenges, and opportunities.
\newblock \emph{CoRR}, abs/2410.13110, 2024.

\bibitem[Chen et~al.(2025)Chen, Sadhukhan, Ye, Zhou, Zhang, Nolte, Tian, Douze, Bottou, Jia, and Chen]{chen2024magicpig}
Z.~Chen, R.~Sadhukhan, Z.~Ye, Y.~Zhou, J.~Zhang, N.~Nolte, Y.~Tian, M.~Douze, L.~Bottou, Z.~Jia, and B.~Chen.
\newblock {MagicPIG: LSH Sampling for Efficient LLM Generation}.
\newblock In \emph{International Conference on Learning Representations}, 2025.

\bibitem[DeepSeek-AI(2024)]{deepseekai2024}
DeepSeek-AI.
\newblock {DeepSeek-V2: A Strong, Economical, and Efficient Mixture-of-Experts Language Model}.
\newblock \emph{arXiv preprint arXiv:2405.04434}, 2024.

\bibitem[Desai et~al.(2025)Desai, Yang, Cuadron, Klimovic, Zaharia, Gonzalez, and Stoica]{desai2025hashattention}
A.~Desai, S.~Yang, A.~Cuadron, A.~Klimovic, M.~Zaharia, J.~E. Gonzalez, and I.~Stoica.
\newblock {HashAttention: Semantic Sparsity for Faster Inference}.
\newblock In \emph{International Conference on Machine Learning}, 2025.

\bibitem[Desai et~al.(2026)Desai, Agrawal, Yang, Cuadron, Schroeder, Zaharia, Gonzalez, and Stoica]{desai2025vattention}
A.~Desai, K.~K. Agrawal, S.~Yang, A.~Cuadron, L.~G. Schroeder, M.~Zaharia, J.~E. Gonzalez, and I.~Stoica.
\newblock {vAttention: Verified Sparse Attention via Sampling}.
\newblock In \emph{The Fourteenth International Conference on Learning Representations}, 2026.
\newblock URL \url{https://openreview.net/forum?id=zzTDulLys0}.

\bibitem[Dosovitskiy et~al.(2021)Dosovitskiy, Beyer, Kolesnikov, Weissenborn, Zhai, Unterthiner, Dehghani, Minderer, Heigold, Gelly, Uszkoreit, and Houlsby]{dosovitskiy2021image}
A.~Dosovitskiy, L.~Beyer, A.~Kolesnikov, D.~Weissenborn, X.~Zhai, T.~Unterthiner, M.~Dehghani, M.~Minderer, G.~Heigold, S.~Gelly, J.~Uszkoreit, and N.~Houlsby.
\newblock An image is worth 16x16 words: Transformers for image recognition at scale.
\newblock In \emph{International Conference on Learning Representations}, 2021.

\bibitem[Ge et~al.(2024)Ge, Zhang, Liu, Zhang, Han, and Gao]{ge2024model}
S.~Ge, Y.~Zhang, L.~Liu, M.~Zhang, J.~Han, and J.~Gao.
\newblock {Model Tells You What to Discard: Adaptive KV Cache Compression for LLMs}.
\newblock In \emph{International Conference on Learning Representations}, 2024.

\bibitem[Grattafiori et~al.(2024)Grattafiori, Dubey, Jauhri, Pandey, Kadian, Al-Dahle, Letman, Mathur, Schelten, Vaughan, et~al.]{llama31}
A.~Grattafiori, A.~Dubey, A.~Jauhri, A.~Pandey, A.~Kadian, A.~Al-Dahle, A.~Letman, A.~Mathur, A.~Schelten, A.~Vaughan, et~al.
\newblock {The Llama 3 Herd of Models}.
\newblock \emph{arXiv preprint arXiv:2407.21783}, 2024.

\bibitem[Haris(2025)]{haris2024knn}
T.~Haris.
\newblock {kNN Attention Demystified: A Theoretical Exploration for Scalable Transformers}.
\newblock In \emph{The Thirteenth International Conference on Learning Representations}, 2025.

\bibitem[Hooper et~al.(2025)Hooper, Kim, Mohammadzadeh, Maheswaran, Zhao, Paik, Mahoney, Keutzer, and Gholami]{hooper2024fast}
C.~R.~C. Hooper, S.~Kim, H.~Mohammadzadeh, M.~Maheswaran, S.~Zhao, J.~Paik, M.~W. Mahoney, K.~Keutzer, and A.~Gholami.
\newblock {Squeezed Attention: Accelerating Long Context Length LLM Inference}.
\newblock In \emph{Proceedings of the 63rd Annual Meeting of the Association for Computational Linguistics}, 2025.

\bibitem[Hou et~al.(2024)Hou, Zhao, Liu, Yang, Wang, Li, Luo, Lo, Grundy, and Wang]{hou2024large}
X.~Hou, Y.~Zhao, Y.~Liu, Z.~Yang, K.~Wang, L.~Li, X.~Luo, D.~Lo, J.~Grundy, and H.~Wang.
\newblock {Large Language Models for Software Engineering: A Systematic Literature Review}.
\newblock \emph{ACM Trans. Softw. Eng. Methodol.}, 33\penalty0 (8), 2024.

\bibitem[Hsieh et~al.(2024)Hsieh, Sun, Kriman, Acharya, Rekesh, Jia, Zhang, and Ginsburg]{hsieh2024ruler}
C.-P. Hsieh, S.~Sun, S.~Kriman, S.~Acharya, D.~Rekesh, F.~Jia, Y.~Zhang, and B.~Ginsburg.
\newblock {RULER: What{\textquoteright}s the Real Context Size of {Y}our Long-Context Language Models?}
\newblock In \emph{First Conference on Language Modeling (COLM)}, 2024.

\bibitem[Indyk and Motwani(1998)]{IndykMotwani1998LSH}
P.~Indyk and R.~Motwani.
\newblock {Approximate Nearest Neighbors: Towards Removing the Curse of Dimensionality}.
\newblock In \emph{Proceedings of the thirtieth annual ACM symposium on Theory of computing}, pages 604--613, 1998.

\bibitem[Joshi et~al.(2026)Joshi, Chowdhury, Bellinger, Kanakamedala, Singh, Le, Desai, and Shrivastava]{joshi2026socket}
S.~Joshi, A.~Chowdhury, W.~Bellinger, A.~Kanakamedala, E.~Singh, H.~A.~D. Le, A.~Desai, and A.~Shrivastava.
\newblock {SOCKET: SOft Collision Kernel EsTimator for Sparse Attention}.
\newblock \emph{arXiv preprint arXiv:2602.06283}, 2026.

\bibitem[Kim et~al.(2025)Kim, Kim, Kwon, Lee, Yun, and Song]{kim2025kvzip}
J.-H. Kim, J.~Kim, S.~Kwon, J.~W. Lee, S.~Yun, and H.~O. Song.
\newblock {KVzip: Query-Agnostic {KV} Cache Compression with Context Reconstruction}.
\newblock In \emph{The Thirty-ninth Annual Conference on Neural Information Processing Systems}, 2025.

\bibitem[Kwon et~al.(2023)Kwon, Li, Zhuang, Sheng, Zheng, Yu, Gonzalez, Zhang, and Stoica]{kwon2023efficient}
W.~Kwon, Z.~Li, S.~Zhuang, Y.~Sheng, L.~Zheng, C.~H. Yu, J.~E. Gonzalez, H.~Zhang, and I.~Stoica.
\newblock {Efficient Memory Management for Large Language Model Serving with PagedAttention}.
\newblock In \emph{Proceedings of the 29th Symposium on Operating Systems Principles}, 2023.

\bibitem[Le et~al.(2026)Le, Joshi, Yang, Xu, and Shrivastava]{le2026scout}
H.~A.~D. Le, S.~Joshi, Z.~Yang, Z.~Xu, and A.~Shrivastava.
\newblock Scout before you attend: Sketch-and-walk sparse attention for efficient {LLM} inference.
\newblock In \emph{Forty-third International Conference on Machine Learning}, 2026.

\bibitem[Lee et~al.(2024)Lee, Chen, Dai, Dua, Sachan, Boratko, Luan, Arnold, Perot, Dalmia, Hu, Lin, Pasupat, Amini, Cole, Riedel, Naim, Chang, and Guu]{lee2024longcontext}
J.~Lee, A.~Chen, Z.~Dai, D.~Dua, D.~S. Sachan, M.~Boratko, Y.~Luan, S.~M.~R. Arnold, V.~Perot, S.~Dalmia, H.~Hu, X.~Lin, P.~Pasupat, A.~Amini, J.~R. Cole, S.~Riedel, I.~Naim, M.-W. Chang, and K.~Guu.
\newblock {Can Long-Context Language Models Subsume Retrieval, RAG, SQL, and More?}
\newblock \emph{arXiv preprint arXiv:2406.13121}, 2024.

\bibitem[Li et~al.(2023)Li, Wang, Ding, and Chen]{li2023large}
Y.~Li, S.~Wang, H.~Ding, and H.~Chen.
\newblock Large language models in finance: A survey.
\newblock In \emph{Proceedings of the 4th ACM International Conference on AI in Finance}, 2023.
\newblock \doi{10.1145/3604237.3626869}.

\bibitem[Li et~al.(2024)Li, Huang, Yang, Venkitesh, Locatelli, Ye, Cai, Lewis, and Chen]{li2024snapkv}
Y.~Li, Y.~Huang, B.~Yang, B.~Venkitesh, A.~Locatelli, H.~Ye, T.~Cai, P.~Lewis, and D.~Chen.
\newblock {SnapKV: LLM Knows What You are Looking for Before Generation}.
\newblock In \emph{The Thirty-eighth Annual Conference on Neural Information Processing Systems}, 2024.

\bibitem[Lin et~al.(2025)Lin, Zhang, Ge, Ren, Patra, Chaudhary, Peng, and Song]{lin2024s2attention}
X.~Lin, Y.~Zhang, S.~Ge, L.~Ren, B.~Patra, V.~Chaudhary, H.~Peng, and X.~Song.
\newblock S2-{ATTENTION}: {HARDWARE}-{AWARE} {CONTEXT} {SHARDING} {AMONG} {ATTENTION} {HEADS}.
\newblock In \emph{Sparsity in LLMs (SLLM): Deep Dive into Mixture of Experts, Quantization, Hardware, and Inference}, 2025.
\newblock URL \url{https://openreview.net/forum?id=AhlE339I7o}.

\bibitem[Liu et~al.(2026)Liu, Khandelwal, Subramanian, Jouault, Rastogi, Sadé, Jeffares, Jiang, Cahill, Gavaudan, Sablayrolles, Héliou, et~al.]{liu2026ministral3}
A.~H. Liu, K.~Khandelwal, S.~Subramanian, V.~Jouault, A.~Rastogi, A.~Sadé, A.~Jeffares, A.~Jiang, A.~Cahill, A.~Gavaudan, A.~Sablayrolles, A.~Héliou, et~al.
\newblock Ministral 3.
\newblock \emph{arXiv preprint arXiv:2601.08584}, 2026.

\bibitem[Liu et~al.(2025)Liu, Chen, Lu, Jiang, Han, Zhang, Chen, Zhang, Ding, Zhang, Chen, Yang, Yang, and Qiu]{liu2024retrievalattention}
D.~Liu, M.~Chen, B.~Lu, H.~Jiang, Z.~Han, Q.~Zhang, Q.~Chen, C.~Zhang, B.~Ding, K.~Zhang, C.~Chen, F.~Yang, Y.~Yang, and L.~Qiu.
\newblock {RetrievalAttention: Accelerating Long-Context LLM Inference via Vector Retrieval}.
\newblock In \emph{The Thirty-ninth Annual Conference on Neural Information Processing Systems}, 2025.

\bibitem[{Mathematical Association of America}(2025)]{dekoninck2026matharena}
{Mathematical Association of America}.
\newblock {American Invitational Mathematics Examination (AIME) 2025}.
\newblock \url{https://maa.org/maa-invitational-competitions/}, 2025.
\newblock Accessed: 2026-05-20.

\bibitem[Pagliardini et~al.(2023)Pagliardini, Paliotta, Jaggi, and Fleuret]{pagliardini2023fast}
M.~Pagliardini, D.~Paliotta, M.~Jaggi, and F.~Fleuret.
\newblock Fast attention over long sequences with dynamic sparse flash attention.
\newblock In \emph{Advances in Neural Information Processing Systems}, volume~36, pages~--, 2023.

\bibitem[Park et~al.(2025)Park, Jones, Morse, Goel, Lee, and Lott]{park2025keydiff}
J.~Park, D.~Jones, M.~J. Morse, R.~Goel, M.~Lee, and C.~Lott.
\newblock {KeyDiff: Key Similarity-Based {KV} Cache Eviction for Long-Context LLM Inference in Resource-Constrained Environments}.
\newblock In \emph{The Thirty-ninth Annual Conference on Neural Information Processing Systems}, 2025.

\bibitem[Peebles and Xie(2023)]{Peebles_2023_ICCV}
W.~Peebles and S.~Xie.
\newblock {Scalable Diffusion Models with Transformers}.
\newblock In \emph{Proceedings of the IEEE/CVF International Conference on Computer Vision (ICCV)}, 2023.

\bibitem[Pinelis(1994)]{pinelis1994optimum}
I.~Pinelis.
\newblock Optimum bounds for the distributions of martingales in banach spaces.
\newblock \emph{The Annals of Probability}, 22\penalty0 (4):\penalty0 1679--1706, 1994.
\newblock \doi{10.1214/aop/1176988477}.

\bibitem[Ruan and Jin(2022)]{ruan2021transformer}
L.~Ruan and Q.~Jin.
\newblock {Survey: Transformer based Video-Language Pre-training}.
\newblock \emph{AI Open}, 3:\penalty0 1--13, 2022.
\newblock \url{https://www.sciencedirect.com/science/article/pii/S2666651022000018}.

\bibitem[Sengupta et~al.(2025)Sengupta, Chaudhary, and Chakraborty]{sengupta2025valueguided}
A.~Sengupta, S.~Chaudhary, and T.~Chakraborty.
\newblock {Value-Guided KV Compression for LLMs via Approximated CUR Decomposition}.
\newblock In \emph{The Thirty-ninth Annual Conference on Neural Information Processing Systems}, 2025.

\bibitem[Shah et~al.(2024)Shah, Bikshandi, Zhang, Thakkar, Ramani, and Dao]{shah2024flashattention}
J.~Shah, G.~Bikshandi, Y.~Zhang, V.~Thakkar, P.~Ramani, and T.~Dao.
\newblock {FlashAttention-3: Fast and Accurate Attention with Asynchrony and Low-precision}.
\newblock In \emph{The Thirty-eighth Annual Conference on Neural Information Processing Systems}, 2024.

\bibitem[Shanbhag et~al.(2018)Shanbhag, Pirk, and Madden]{radixtopk}
A.~Shanbhag, H.~Pirk, and S.~Madden.
\newblock Efficient top-k query processing on massively parallel hardware.
\newblock In \emph{Proceedings of the 2018 International Conference on Management of Data}, SIGMOD '18. Association for Computing Machinery, 2018.
\newblock ISBN 9781450347037.
\newblock URL \url{https://doi.org/10.1145/3183713.3183735}.

\bibitem[Shazeer(2019)]{shazeer2019fast}
N.~Shazeer.
\newblock Fast transformer decoding: One write-head is all you need.
\newblock \emph{arXiv preprint arXiv:1911.02150}, 2019.

\bibitem[Singhania et~al.(2024)Singhania, Singh, He, Feizi, and Bhatele]{loki2024lowrank}
P.~Singhania, S.~Singh, S.~He, S.~Feizi, and A.~Bhatele.
\newblock {Loki: Low-rank Keys for Efficient Sparse Attention}.
\newblock In \emph{Advances in Neural Information Processing Systems}, 2024.

\bibitem[Tang et~al.(2024)Tang, Zhao, Zhu, Xiao, Kasikci, and Han]{pmlr-v235-tang24l}
J.~Tang, Y.~Zhao, K.~Zhu, G.~Xiao, B.~Kasikci, and S.~Han.
\newblock {QUEST: Query-Aware Sparsity for Efficient Long-Context LLM Inference}.
\newblock In \emph{International Conference on Machine Learning}, 2024.

\bibitem[Team(2026)]{qwen35blog}
Q.~Team.
\newblock Qwen3.5: Accelerating productivity with native multimodal agents, February 2026.
\newblock URL \url{https://qwen.ai/blog?id=qwen3.5}.

\bibitem[Tokdar and Kass(2010)]{tokdar2010importance}
S.~T. Tokdar and R.~E. Kass.
\newblock Importance sampling: a review.
\newblock \emph{Wiley Interdisciplinary Reviews: Computational Statistics}, 2, 2010.

\bibitem[Vitter(1985)]{jeffreyuniform}
J.~S. Vitter.
\newblock Random sampling with a reservoir.
\newblock \emph{ACM Transactions on Mathematical Software (TOMS)}, 11\penalty0 (1), 1985.

\bibitem[Yan et~al.(2025)Yan, Jiang, Zhang, Ma, Zhu, Cao, and Xu]{zhang2024adamas}
S.~Yan, G.-Q. Jiang, Y.~Zhang, X.~Ma, R.~Zhu, C.~Cao, and J.~Xu.
\newblock {Adamas: Hadamard Sparse Attention for Efficient Long-Context Inference}.
\newblock \emph{arXiv preprint arXiv:2510.18413}, 2025.

\bibitem[Yang et~al.(2026)Yang, Tang, Zhen, and Lv]{yang2026rethinking}
J.~Yang, C.~Tang, L.~Zhen, and J.~Lv.
\newblock Rethinking {KV} cache eviction via a unified information-theoretic objective.
\newblock In \emph{Forty-third International Conference on Machine Learning}, 2026.

\bibitem[Yang et~al.(2024)Yang, Sheng, Gonzalez, Stoica, and Zheng]{yang2024double}
S.~Yang, Y.~Sheng, J.~E. Gonzalez, I.~Stoica, and L.~Zheng.
\newblock {Post-Training Sparse Attention with Double Sparsity}.
\newblock \emph{arXiv preprint arXiv:2408.07092}, 2024.

\bibitem[Zhang et~al.(2025)Zhang, Ji, Chen, Fu, Miao, Nie, Chen, and Cui]{pqcache2025}
H.~Zhang, X.~Ji, Y.~Chen, F.~Fu, X.~Miao, X.~Nie, W.~Chen, and B.~Cui.
\newblock {PQCache: Product Quantization-based KVCache for Long Context LLM Inference}.
\newblock \emph{Proceedings of the ACM on Management of Data}, 3\penalty0 (3):\penalty0 1--30, 2025.

\bibitem[Zhang et~al.(2023)Zhang, Sheng, Zhou, Chen, Zheng, Cai, Song, Tian, R{\'e}, Barrett, Wang, and Chen]{zhang2023h2o}
Z.~Zhang, Y.~Sheng, T.~Zhou, T.~Chen, L.~Zheng, R.~Cai, Z.~Song, Y.~Tian, C.~R{\'e}, C.~W. Barrett, Z.~Wang, and B.~Chen.
\newblock {H$_2$O: Heavy-Hitter Oracle for Efficient Generative Inference of Large Language Models}.
\newblock In \emph{Advances in Neural Information Processing Systems}, 2023.

\bibitem[Zhou et~al.(2024{\natexlab{a}})Zhou, Sun, and Li]{Zhou2024DBGPTLL}
X.~Zhou, Z.~Sun, and G.~Li.
\newblock {DB-GPT: Large Language Model Meets Database}.
\newblock \emph{Data Science and Engineering}, 9:\penalty0 102 -- 111, 2024{\natexlab{a}}.

\bibitem[Zhou et~al.(2024{\natexlab{b}})Zhou, Wang, Zeng, Guo, Liu, Shen, Zhang, and Ding]{zhou2025dynamickv}
X.~Zhou, W.~Wang, M.~Zeng, J.~Guo, X.~Liu, L.~Shen, M.~Zhang, and L.~Ding.
\newblock {DynamicKV: Task-Aware Adaptive KV Cache Compression for Long Context LLMs}.
\newblock \emph{arXiv preprint arXiv:2412.14838}, 2024{\natexlab{b}}.

\bibitem[Zhu et~al.(2026)Zhu, Tang, Xu, Jin, Gu, Zeng, Kadekodi, Zhao, Li, Krishnamurthy, and Kasikci]{zhu2025topp}
K.~Zhu, T.~Tang, Q.~Xu, Z.~Jin, Y.~Gu, Z.~Zeng, R.~Kadekodi, L.~Zhao, A.~Li, A.~Krishnamurthy, and B.~Kasikci.
\newblock {Tactic: Adaptive Sparse Attention with Clustering and Distribution Fitting for Long-Context LLMs}.
\newblock In \emph{The Fourteenth International Conference on Learning Representations}, 2026.

\end{thebibliography}
}

\newpage
\appendix

\section{Appendix}

\subsection{Proof of Lemma~\ref{lem:deduplication}}
\begin{proof}
Token $i$ is included if it appears in at least one of the $m$
independent draws, so
\[
\pi_i=1-(1-p_i)^m.
\]
By the union bound,
\[
\pi_i
=
\Pr\!\left(\bigcup_{\ell=1}^m\{J_\ell=i\}\right)
\le
\sum_{\ell=1}^m\Pr(J_\ell=i)
=
mp_i.
\]

For the lower bound, $1-p_i\le e^{-p_i}$ gives
\[
\pi_i
=
1-(1-p_i)^m
\ge
1-e^{-mp_i}.
\]
The function $g(x)=(1-e^{-x})/x$ is decreasing for $x>0$.
Since $mp_i\le c$,
\[
1-e^{-mp_i}
=
mp_i\,g(mp_i)
\ge
mp_i\,g(c)
=
\kappa(c)mp_i.
\]
This proves the two-sided bound on $\pi_i$.
\\
\noindent
Finally, the definition of $\rho$ gives
$a_i=A_{\mathcal R}\bar a_i\le A_{\mathcal R}\rho p_i$.
Combining this with the lower bound on $\pi_i$ yields
\[
\frac{a_i}{\pi_i}
\le
\frac{A_{\mathcal R}\rho p_i}
     {\kappa(c)mp_i}
=
\frac{A_{\mathcal R}\rho}{\kappa(c)m}.
\]
\end{proof}

\subsection{Proof of Theorem~\ref{thm:ssa-hoeffding}}
\label{app:proof-ssa-hoeffding}

We first establish the following dimension-free bounded-differences inequality
used in the proof.

\begin{lemma}[Vector-valued bounded differences]
\label{lem:hilbert-bounded-differences}
Let $X_1,\ldots,X_m$ be independent random variables, and let
$f$ be a measurable function taking values in $\mathbb R^r$.
Suppose that changing only the $\ell^{th}$ argument changes $f$
by at most $b_\ell$ in Euclidean norm \ie,
\[
\left\|
f(x_1,\ldots,x_\ell,\ldots,x_m)
-
f(x_1,\ldots,x_\ell',\ldots,x_m)
\right\|_2
\le b_\ell
\]
for every $\ell\in[m]$ and every pair of input sequences
differing only in coordinate $\ell$, where $b_\ell\ge0$ and
$\sum_{\ell=1}^m b_\ell^2>0$. Then, for every $t>0$,
\begin{equation}
\Pr\left(
\left\|
f(X_1,\ldots,X_m)
-
\mathbb E[f(X_1,\ldots,X_m)]
\right\|_2
\ge t
\right)
\le
2\exp\left(
-\frac{t^2}{2\sum_{\ell=1}^m b_\ell^2}
\right).
\label{eq:hilbert-bounded-differences}
\end{equation}
\end{lemma}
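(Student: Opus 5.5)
The statement to prove is Lemma~\ref{lem:hilbert-bounded-differences}, a dimension-free McDiarmid inequality in $\mathbb R^r$. My plan is the martingale route via Pinelis' inequality for martingales in $2$-smooth Banach spaces~\citep{pinelis1994optimum}, which the bibliography suggests is the intended tool. Write $F=f(X_1,\ldots,X_m)$ and $\mathcal F_\ell=\sigma(X_1,\ldots,X_\ell)$. Consider the Doob martingale $M_\ell=\mathbb E[F\mid\mathcal F_\ell]$, so $M_0=\mathbb E F$ and $M_m=F$. Its differences are $D_\ell=M_\ell-M_{\ell-1}$, so that $F-\mathbb E F=\sum_{\ell=1}^m D_\ell$ is a sum of $\mathbb R^r$-valued martingale differences.

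The first step is to bound each increment almost surely. By independence,
\[
D_\ell=\int\Bigl(f(X_1,\ldots,X_\ell,x_{\ell+1},\ldots,x_m)-\int f(X_1,\ldots,X_{\ell-1},x_\ell',x_{\ell+1},\ldots,x_m)\,dP_\ell(x_\ell')\Bigr)\,dP_{>\ell},
\]
where $P_\ell$ is the law of $X_\ell$ and $P_{>\ell}$ the joint law of $X_{\ell+1},\ldots,X_m$. By the bounded-difference hypothesis and Jensen/convexity of the norm, $\|D_\ell\|_2\le b_\ell$ almost surely. Measurability and integrability are immediate because $f$ is bounded relative to any fixed point, up to $\sum b_\ell$.

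The second step applies Pinelis' Theorem 3.5 for Hilbert-space-valued martingales with $\|D_\ell\|_\infty\le b_\ell$:
\[
\Pr\Bigl(\sup_{\ell}\|M_\ell-M_0\|\ge t\Bigr)\le 2\exp\bigl(-t^2/(2\sum_\ell b_\ell^2)\bigr).
\]
Since $\mathbb R^r$ with the Euclidean norm is a Hilbert space (smoothness constant $D=1$), the constant in the exponent is exactly $2$. Specialising to $\ell=m$ gives the lemma.

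The main obstacle is verifying that constant, which is the one thing a routine argument does not deliver. If I had to prove it without citing Pinelis, I would follow his argument directly. Take the function $u(x)=\cosh(\lambda\|x\|)$ and show by a second-order Taylor argument that
\[
\mathbb E[\cosh(\lambda\|M_\ell\|)\mid\mathcal F_{\ell-1}]\le \cosh(\lambda\|M_{\ell-1}\|)\,e^{\lambda^2 b_\ell^2/2}.
\]
This rests on the Hilbert-space bound $\frac{d^2}{d\tau^2}\cosh(\lambda\|x+\tau d\|)\le\lambda^2\|d\|^2\cosh(\lambda\|x+\tau d\|)$ together with $\mathbb E[D_\ell\mid\mathcal F_{\ell-1}]=0$, which kills the first-order term. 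Iterating gives $\mathbb E\cosh(\lambda\|F-\mathbb EF\|)\le e^{\lambda^2\sum b_\ell^2/2}$. Markov's inequality with $\cosh y\ge e^y/2$ then yields the bound $2e^{-\lambda t+\lambda^2\sum b_\ell^2/2}$, and optimising at $\lambda=t/\sum b_\ell^2$ gives the stated bound. The delicate part is the second-derivative estimate near $x+\tau d=0$, where the norm is not smooth. I would handle it by working with the smooth surrogate $\cosh(\lambda\sqrt{\|x\|^2+\eta})$ and letting $\eta\to0$.
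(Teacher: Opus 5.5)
Your proposal is correct and follows the paper's proof essentially step for step. Both use the Doob martingale $M_\ell$, obtain the almost-sure bound $\|M_\ell-M_{\ell-1}\|_2\le b_\ell$ by integrating out the future coordinates and applying Jensen against an independent copy of $X_\ell$, and then apply the Euclidean (smoothness constant $1$) case of Pinelis' Theorem~3.5 to $M_\ell-M_0$. In your optional self-contained sketch, a plain Taylor-remainder bound would weaken the constant in the exponent; instead, compare $\phi(\tau)=\cosh(\lambda\|x+\tau d\|_2)$ with the solution of $\psi''=\lambda^2b_\ell^2\psi$ to get $\phi(1)\le\phi(0)\cosh(\lambda b_\ell)+\phi'(0)\sinh(\lambda b_\ell)/(\lambda b_\ell)$, whose last term has zero conditional mean, and note that no smoothing is needed since $\cosh(\lambda\|y\|_2)$ is a power series in $\|y\|_2^2$.
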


\begin{proof}
Reveal the variables one at a time and define
\[
M_\ell
=
\mathbb E\!\left[
f(X_1,\ldots,X_m)\mid X_1,\ldots,X_\ell
\right],
\qquad \ell=0,\ldots,m.
\]
Here
\[
M_0=\mathbb E[f(X_1,\ldots,X_m)],
\qquad
M_m=f(X_1,\ldots,X_m).
\]
The sequence $(M_\ell)_{\ell=0}^m$ is a vector-valued
martingale.
\\
\noindent
We first bound its increments. Fix the first $\ell-1$
variables and let
\[
g_\ell(x)
=
\mathbb E_{X_{\ell+1},\ldots,X_m}\!\left[
f(X_1,\ldots,X_{\ell-1},x,X_{\ell+1},\ldots,X_m)
\right].
\]
Averaging over the same future variables preserves the
bounded-differences condition:
\[
\|g_\ell(x)-g_\ell(x')\|_2\le b_\ell.
\]
Let $X_\ell'$ be an independent copy of $X_\ell$, independent
of all the original variables. Since
$M_\ell=g_\ell(X_\ell)$ and $M_{\ell-1}$ averages
$g_\ell$ over the $\ell^{th}$ variable, Jensen's inequality gives
\begin{align*}
\|M_\ell-M_{\ell-1}\|_2
&=
\left\|
\mathbb E\!\left[
g_\ell(X_\ell)-g_\ell(X_\ell')
\mid X_1,\ldots,X_\ell
\right]
\right\|_2\\
&\le
\mathbb E\!\left[
\|g_\ell(X_\ell)-g_\ell(X_\ell')\|_2
\mid X_1,\ldots,X_\ell
\right]\\
&\le b_\ell.
\end{align*}
\noindent
The Euclidean specialization of the martingale inequality
in \cite[Theorem~3.5]{pinelis1994optimum}, applied to
$(M_\ell-M_0)_{\ell=0}^m$, therefore yields
\[
\Pr\left(
\|M_m-M_0\|_2\ge t
\right)
\le
2\exp\left(
-\frac{t^2}{2\sum_{\ell=1}^m b_\ell^2}
\right).
\]
Substituting $M_m$ and $M_0$ proves the claim.
\end{proof}

Now, we are ready to prove Theorem~\ref{thm:ssa-hoeffding}.
\begin{proof}[Proof of Theorem~\ref{thm:ssa-hoeffding}]
We consider the sparse regime with an integer budget $1\le k<N$
and finite attention and proxy scores. The case $k=N$ can be
handled by evaluating dense attention exactly. Recall that $\mathcal H$ and $\mathcal R=[N]\setminus\mathcal H$ be the
heavy and residual sets produced by SSA, and let
$h=|\mathcal H|$ and $m=k-h$. As shown in Section~\ref{sec:theory},
$\mathcal R\neq\emptyset$ and $m\ge1$. Since the proxy scores are finite,
$p_i>0$ for every $i\in\mathcal R$.
\\
\noindent
Let $J_1,\ldots,J_m$ denote the independent residual draws:
\[
\Pr(J_\ell=i)=p_i,
\qquad i\in\mathcal R,
\qquad \ell\in[m].
\]
For each residual token, define its occupancy indicator
\[
Z_i
=
\mathbf 1\!\left\{
J_\ell=i\text{ for some }\ell\in[m]
\right\},
\qquad i\in\mathcal R.
\]
We know that its inclusion probability is given by
$\pi_i
=
\Pr(Z_i=1)
=
1-(1-p_i)^m$,
so $\mathbb E[Z_i]=\pi_i$. Heavy tokens are retained
deterministically and have inclusion probability one.
\\
\noindent
Note that although the residual indicators $\{Z_i\}_{i\in\mathcal R}$ are
dependent, they are deterministic functions of the independent
draws $J_1,\ldots,J_m$. We therefore apply
Lemma~\ref{lem:hilbert-bounded-differences} to these draws.
\\
\noindent
Write
\[
D:=\sum_{i=1}^{N}w_i,
\qquad
a_i=\frac{w_i}{D},
\qquad
\mathbf o=\sum_{i=1}^{N}a_i\mathbf v_i.
\]
Define the estimated denominator and numerator by
\begin{align*}
\widehat D
&:=
\sum_{i\in\mathcal H}w_i
+
\sum_{i\in\mathcal R}\frac{Z_iw_i}{\pi_i},\\
\widehat{\mathbf N}
&:=
\sum_{i\in\mathcal H}w_i\mathbf v_i
+
\sum_{i\in\mathcal R}
\frac{Z_iw_i}{\pi_i}\mathbf v_i.
\end{align*}
The SSA output is therefore
\[
\widehat{\mathbf o}
=
\frac{\widehat{\mathbf N}}{\widehat D}.
\]
Since $m\ge1$ and all weights are positive,
$\widehat D>0$ for every realization of the residual draws.
\\
\noindent
If $\|\mathbf V\|_2=0$, all value vectors vanish,
and $\widehat{\mathbf o}=\mathbf o=\mathbf 0$, so the theorem is
immediate. Therefore, we assume $\|\mathbf V\|_2>0$.
\\
\noindent
By Lemma~\ref{lem:deduplication},
\[
\frac{a_i}{\pi_i}
\le
\frac{A_{\mathcal R}\rho}{\kappa(c)m},
\qquad i\in\mathcal R.
\]
For brevity, define
\begin{equation}
\beta
:=
\frac{A_{\mathcal R}\rho}{\kappa(c)m}.
\label{eq:beta-corrected-weight}
\end{equation}
Thus $a_i/\pi_i\le\beta$ for every $i\in\mathcal R$.
\\
\noindent
Next, we define
\begin{equation}
\Delta_D
:=
\frac{\widehat D-D}{D}.
\label{eq:normalized-denominator-error}
\end{equation}
The contributions from $\mathcal H$ are evaluated exactly and
cancel in $\widehat D-D$. Consequently,
\begin{equation}
\Delta_D
=
\sum_{i\in\mathcal R}
a_i\left(\frac{Z_i}{\pi_i}-1\right).
\label{eq:denominator-error-expansion}
\end{equation}
Since $\mathbb E[Z_i]=\pi_i$,
\[
\mathbb E[\Delta_D]=0.
\]
\noindent
Consider two draw sequences $\mathbf J=(J_1,\ldots,J_m)$ and
$\mathbf J'$ differing only in coordinate $\ell$, where the draw
changes from $r$ to $s$. Only $Z_r$ and $Z_s$ can change, and each
changes by at most one. Therefore,
\begin{align}
\left|
\Delta_D(\mathbf J)-\Delta_D(\mathbf J')
\right|
&\le
\frac{a_r}{\pi_r}+\frac{a_s}{\pi_s}
\notag\\
&\le 2\beta.
\label{eq:denominator-sensitivity}
\end{align}
\noindent
Applying Lemma~\ref{lem:hilbert-bounded-differences} with
$r=1$ and $b_\ell=2\beta$ gives, for every $t>0$,
\[
\Pr\left(|\Delta_D|\ge t\right)
\le
2\exp\left(-\frac{t^2}{8m\beta^2}\right).
\]
Taking $t=2\sqrt{2m}\,\beta\sqrt u$, we obtain, for every $u>0$,
\begin{equation}
\Pr\left(
|\Delta_D|
\ge
2\sqrt{2m}\,\beta\sqrt u
\right)
\le 2e^{-u}.
\label{eq:denominator-concentration-intermediate}
\end{equation}
Substituting eq.~\ref{eq:beta-corrected-weight} yields
\begin{equation}
\Pr\left(
|\Delta_D|
\ge
\frac{2\sqrt{2}\,A_{\mathcal R}\rho}{\kappa(c)}
\sqrt{\frac{u}{m}}
\right)
\le 2e^{-u}.
\label{eq:denominator-concentration}
\end{equation}
\noindent
Now, we define
\begin{equation}
\mathbf E
:=
\frac{\widehat{\mathbf N}-\mathbf o\widehat D}{D}.
\label{eq:centered-vector-error}
\end{equation}
Because $\sum_{i=1}^{N}a_i=1$ and
$\mathbf o=\sum_{i=1}^{N}a_i\mathbf v_i$, we have
\[
\sum_{i=1}^{N}a_i(\mathbf v_i-\mathbf o)=\mathbf 0.
\]
It follows that
\begin{align}
\mathbf E
&=
\sum_{i\in\mathcal H}a_i(\mathbf v_i-\mathbf o)
+
\sum_{i\in\mathcal R}
\frac{Z_i a_i}{\pi_i}(\mathbf v_i-\mathbf o)
\notag\\
&=
\sum_{i\in\mathcal R}
a_i\left(\frac{Z_i}{\pi_i}-1\right)
(\mathbf v_i-\mathbf o).
\label{eq:centered-vector-expansion}
\end{align}
Again using $\mathbb E[Z_i]=\pi_i$, we obtain
\[
\mathbb E[\mathbf E]=\mathbf 0.
\]
\noindent
Let $\mathbf e_i$ denote the $i^{th}$ standard basis vector of
$\mathbb R^N$. Since the value vectors are the rows of
$\mathbf V$, for every $i\in[N]$,
\[
\|\mathbf v_i\|_2
=
\|\mathbf V^\top\mathbf e_i\|_2
\le
\|\mathbf V^\top\|_2\|\mathbf e_i\|_2
=
\|\mathbf V\|_2.
\]
Moreover, $\mathbf o$ is a convex combination of the value
vectors, so
\[
\|\mathbf o\|_2
\le
\sum_{i=1}^{N}a_i\|\mathbf v_i\|_2
\le
\|\mathbf V\|_2.
\]
Therefore,
\begin{equation}
\|\mathbf v_i-\mathbf o\|_2
\le
\|\mathbf v_i\|_2+\|\mathbf o\|_2
\le
2\|\mathbf V\|_2.
\label{eq:value-deviation-bound}
\end{equation}
\noindent
Now consider draw sequences $\mathbf J$ and $\mathbf J'$ that
differ only in one draw, changed from $r$ to $s$. As before, only
$Z_r$ and $Z_s$ can change. Hence,
\begin{align}
\|\mathbf E(\mathbf J)-\mathbf E(\mathbf J')\|_2
&\le
\frac{a_r}{\pi_r}\|\mathbf v_r-\mathbf o\|_2
+
\frac{a_s}{\pi_s}\|\mathbf v_s-\mathbf o\|_2
\notag\\
&\le
4\beta\|\mathbf V\|_2.
\label{eq:vector-sensitivity}
\end{align}
\noindent
Applying Lemma~\ref{lem:hilbert-bounded-differences} with
$r=d$ and $b_\ell=4\beta\|\mathbf V\|_2$ gives,
for every $t>0$,
\[
\Pr\left(\|\mathbf E\|_2\ge t\right)
\le
2\exp\left(
-\frac{t^2}{32m\beta^2\|\mathbf V\|_2^2}
\right).
\]
Taking
$t=4\sqrt{2m}\,\beta\sqrt u\,\|\mathbf V\|_2$ gives, for
every $u>0$,
\begin{equation}
\Pr\left(
\|\mathbf E\|_2
\ge
4\sqrt{2m}\,\beta\sqrt u\,\|\mathbf V\|_2
\right)
\le 2e^{-u}.
\label{eq:vector-concentration-intermediate}
\end{equation}
Equivalently,
\begin{equation}
\Pr\left(
\|\mathbf E\|_2
\ge
\frac{4\sqrt{2}\,A_{\mathcal R}\rho}{\kappa(c)}
\sqrt{\frac{u}{m}}\,
\|\mathbf V\|_2
\right)
\le 2e^{-u}.
\label{eq:vector-concentration}
\end{equation}
\noindent
Now, fix the failure probability $\delta\in(0,1)$ and set
\[
u=\log\frac{4}{\delta}.
\]
Then $2e^{-u}=\delta/2$. By
eq.~\ref{eq:denominator-concentration}, with probability at least
$1-\delta/2$,
\[
|\Delta_D|
\le
\frac{2\sqrt{2}\,A_{\mathcal R}\rho}{\kappa(c)}
\sqrt{\frac{\log(4/\delta)}{m}}
=
\epsilon_D(\delta).
\]
Similarly, by eq.~\ref{eq:vector-concentration}, with probability
at least $1-\delta/2$,
\[
\|\mathbf E\|_2
\le
\frac{4\sqrt{2}\,A_{\mathcal R}\rho}{\kappa(c)}
\sqrt{\frac{\log(4/\delta)}{m}}\,
\|\mathbf V\|_2.
\]
A union bound shows that both inequalities hold simultaneously
with probability at least $1-\delta$.
\\
\noindent
By the definition of $\Delta_D$,
\[
\frac{\widehat D}{D}=1+\Delta_D.
\]
Consequently,
\begin{align}
\widehat{\mathbf o}-\mathbf o
=
\frac{\widehat{\mathbf N}}{\widehat D}-\mathbf o
=
\frac{\widehat{\mathbf N}-\mathbf o\widehat D}
     {\widehat D}
=
\frac{\mathbf E}{1+\Delta_D}.
\label{eq:ratio-error-identity}
\end{align}
\noindent
On the simultaneous concentration event, the assumption
$\epsilon_D(\delta)<1$ implies
\[
1+\Delta_D
\ge
1-|\Delta_D|
\ge
1-\epsilon_D(\delta)
>0.
\]
Therefore,
\begin{align*}
\|\widehat{\mathbf o}-\mathbf o\|_2
&=
\frac{\|\mathbf E\|_2}{1+\Delta_D}\le
\frac{\|\mathbf E\|_2}{1-\epsilon_D(\delta)}\\
&\le
\frac{
4\sqrt{2}\,A_{\mathcal R}\rho
}{
\kappa(c)\bigl(1-\epsilon_D(\delta)\bigr)
}
\sqrt{\frac{\log(4/\delta)}{m}}\,
\|\mathbf V\|_2.
\end{align*}
This proves eq.~\ref{eq:ssa-hoeffding} with probability at least
$1-\delta$.
\end{proof}

\subsection{Proof of Corollary~\ref{cor:ssa-sample-complexity}}

\begin{proof}
Let
\[
x:=\epsilon_D(\delta)
=
\frac{2\sqrt{2}\,A_{\mathcal R}\rho}{\kappa(c)}
\sqrt{\frac{\log(4/\delta)}{m}}.
\]
The assumed lower bound on $m$ gives
\[
x
\le
\frac{2\sqrt{2}}{\sqrt{21}\,\kappa(c)}
\frac{\varepsilon}{2+\varepsilon}.
\]
Under the Algorithm~\ref{alg:ssa} allocation rule, $c<1$, so
$\kappa(c)>1-e^{-1}$. Since
\[
\frac{8}{(1-e^{-1})^2}<21,
\]
we have
\[
x\le\frac{\varepsilon}{2+\varepsilon}<1.
\]
Thus the denominator-stability condition in
Theorem~\ref{thm:ssa-hoeffding} holds.
\\
\noindent
The numerator of the bound in Theorem~\ref{thm:ssa-hoeffding}
equals $2x\|\mathbf V\|_2$. Therefore, with probability at
least $1-\delta$,
\[
\|\widehat{\mathbf o}-\mathbf o\|_2
\le
\frac{2x}{1-x}\|\mathbf V\|_2
\le
\frac{2\varepsilon/(2+\varepsilon)}
     {1-\varepsilon/(2+\varepsilon)}
\|\mathbf V\|_2
=
\varepsilon\|\mathbf V\|_2.
\]
\end{proof}

\newpage
\subsection{Additional Experiments}
\begin{figure}[H]
    \centering

    \begin{minipage}[t]{0.48\linewidth}
        \centering
        \includegraphics[width=\linewidth]{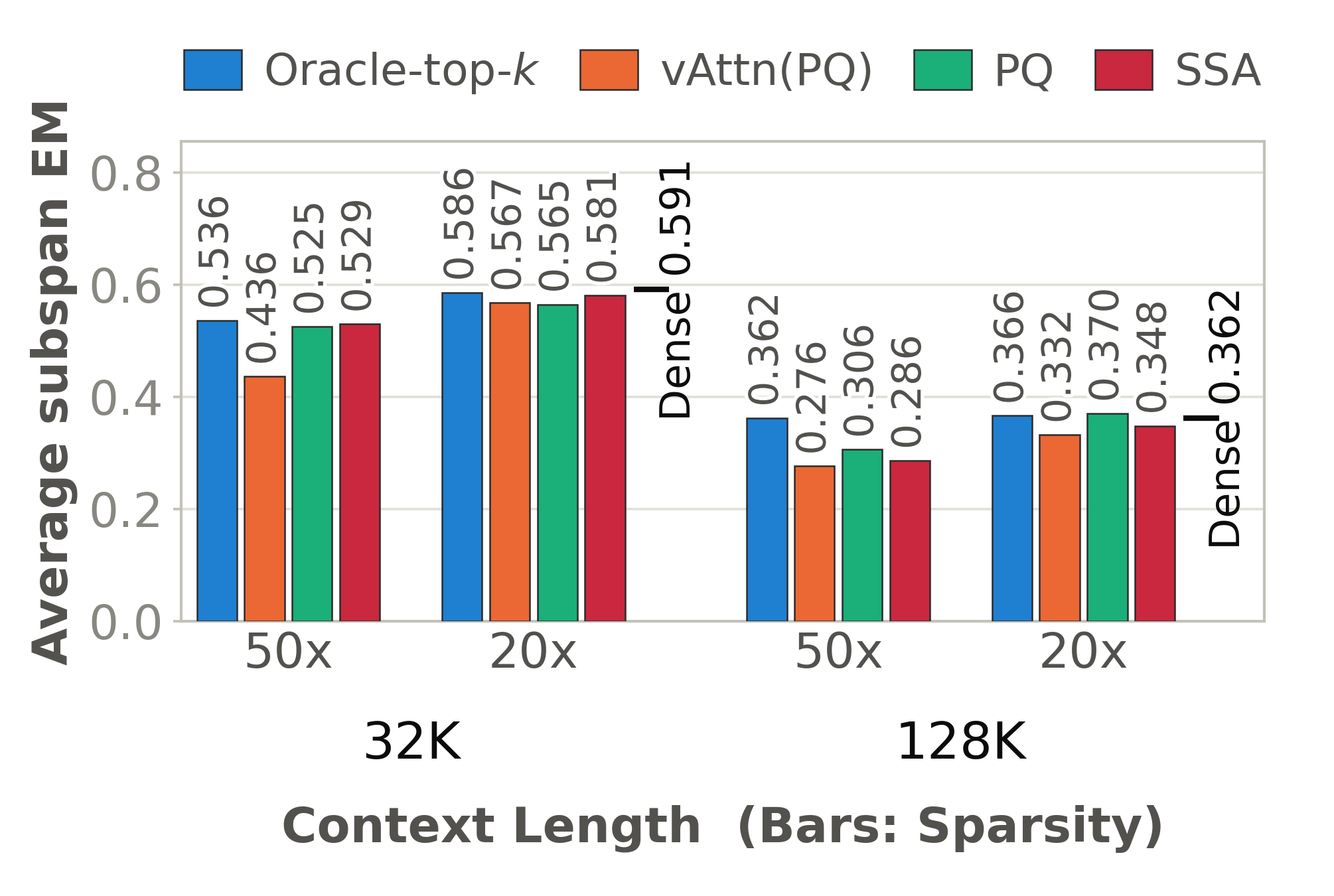}
        \vspace{-0.8em}

        {(a) Ministral-3-8B: Subspan-EM}
    \end{minipage}
    \hfill
    \begin{minipage}[t]{0.48\linewidth}
        \centering
        \includegraphics[width=\linewidth]{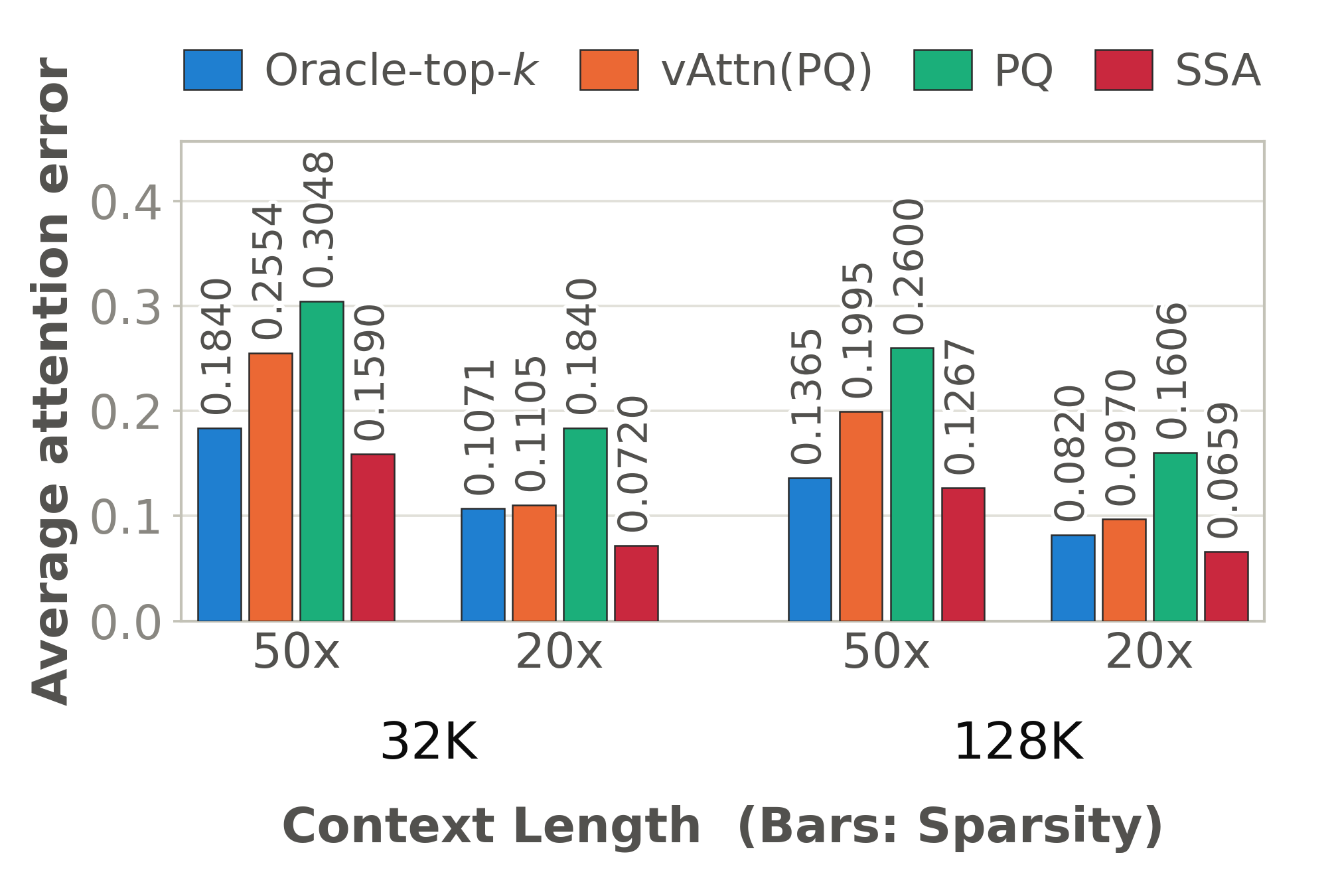}
        \vspace{-0.8em}

        {(b) Ministral-3-8B: Error}
    \end{minipage}

    \vspace{0.8em}

    \begin{minipage}[t]{0.48\linewidth}
        \centering
        \includegraphics[width=\linewidth]{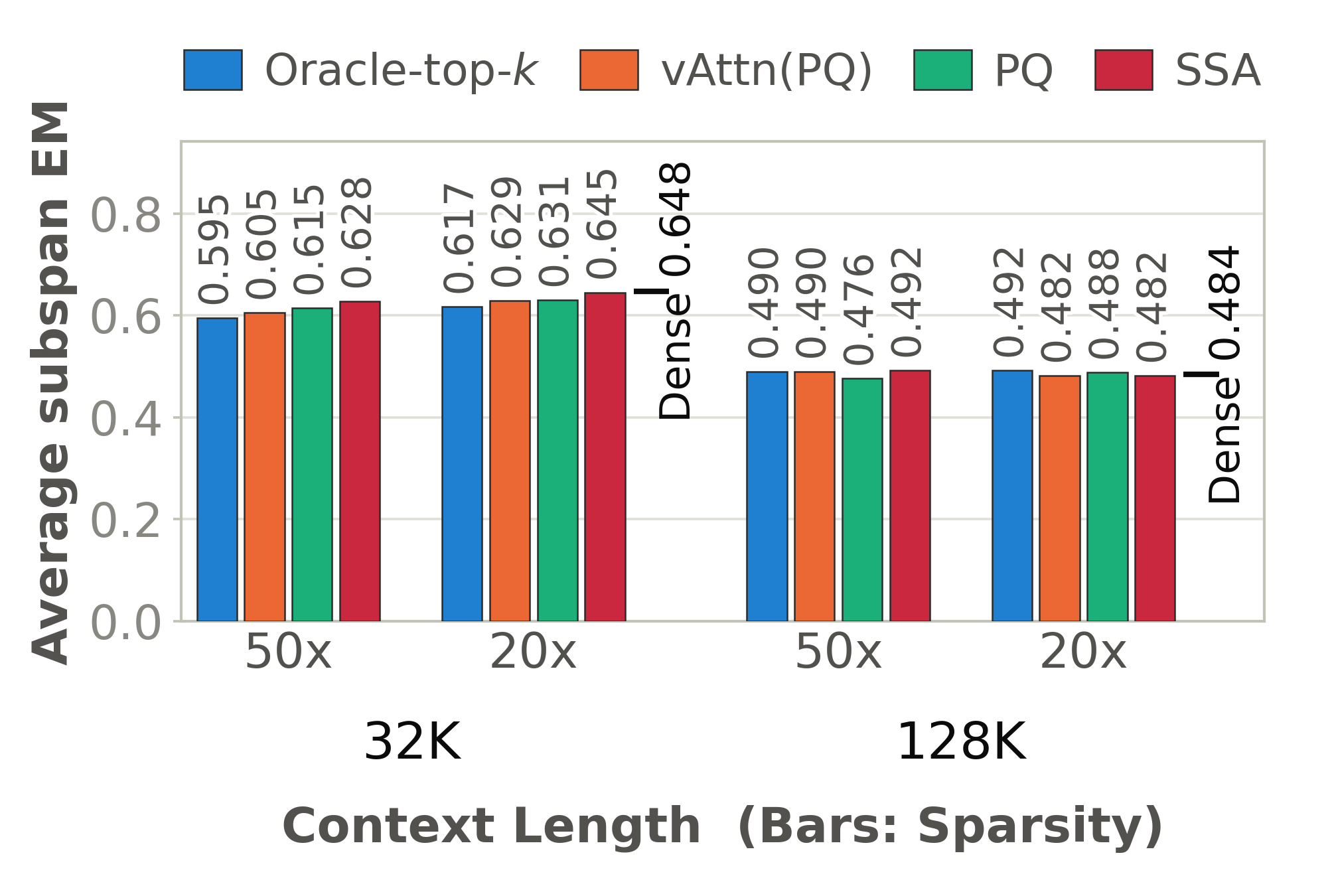}
        \vspace{-0.8em}

        {(c) Qwen3.5-4B: Subspan-EM}
    \end{minipage}
    \hfill
    \begin{minipage}[t]{0.48\linewidth}
        \centering
        \includegraphics[width=\linewidth]{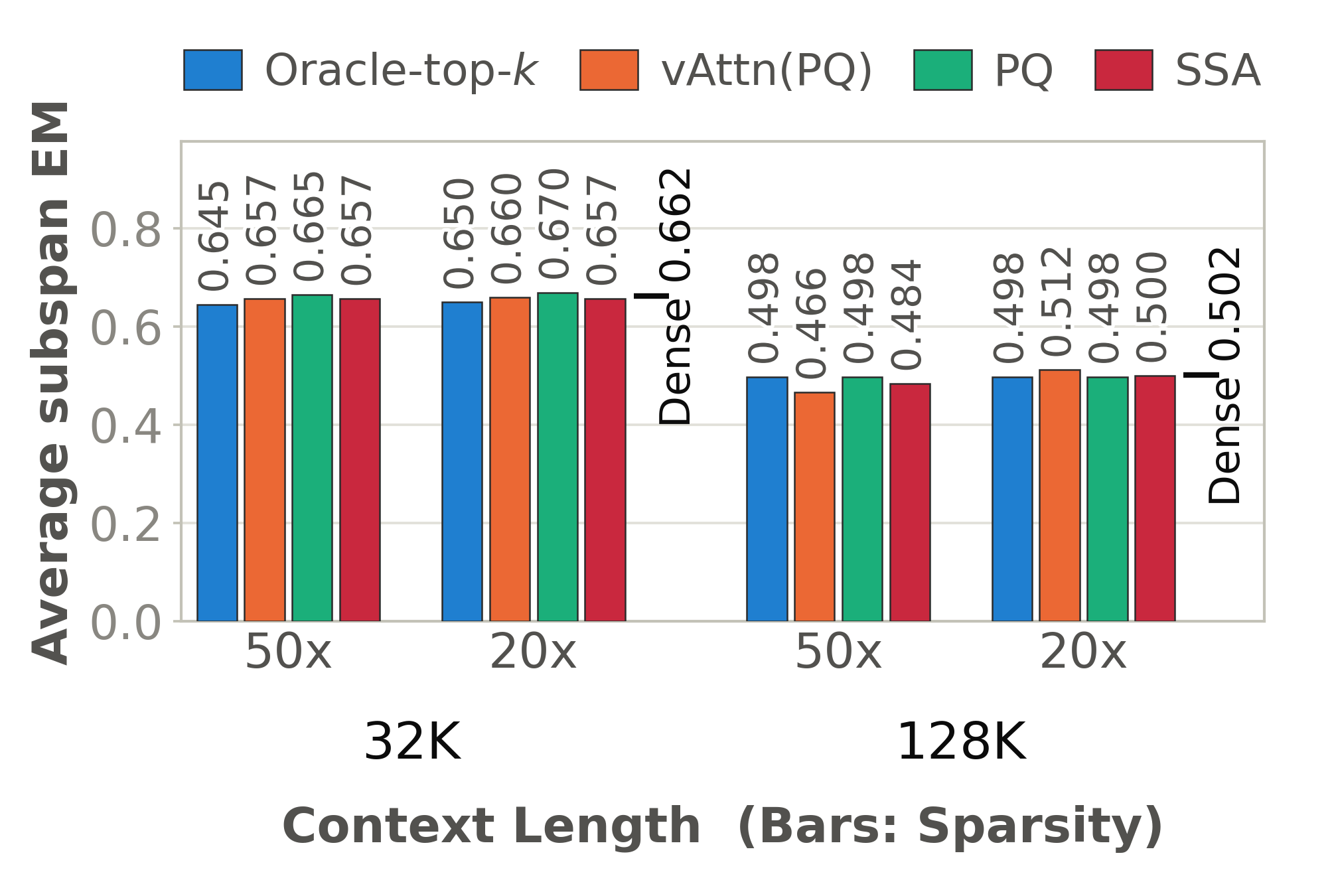}
        \vspace{-0.8em}

        {(d) Qwen3.5-9B: Subspan-EM}
    \end{minipage}

    \caption{Average Subspan-EM and error across different context lengths and sparsity levels on LOFT.}
    \label{fig:loft_results}
\end{figure}

\begin{table}[H]
\centering
\caption{Comparison of theoretical and practical SSA variants on average RULER-HARD-32K performance.}
\label{tab:ssa_theory_practical}
\begin{tabular}{lccc}
\toprule
\textbf{Model} & \textbf{Sparsity} & \textbf{SSA-Theory} & \textbf{SSA-Practical} \\
\midrule
Llama-3.1-8B   & 50$\times$ & 81 & 83 \\
Llama-3.1-8B   & 20$\times$ & 88 & 88 \\
Ministral-3-8B & 50$\times$ & 82 & 84 \\
Ministral-3-8B & 20$\times$ & 89 & 88 \\
\bottomrule
\end{tabular}
\end{table}

\begin{table}[H]
    \centering
    \caption{Average performance on RULER-HARD-32K across different models at 50$\times$ sparsity.}
    \label{tab:ruler_hard_32k_all_models}
    \setlength{\tabcolsep}{6pt}
    \resizebox{\linewidth}{!}{%
    \begin{tabular}{lccccccc}
        \toprule
        \textbf{Model}
        & \textbf{Dense}
        & \textbf{PQCache}
        & \textbf{vAttn(PQ)}
        & \textbf{Quest}
        & \textbf{HashAttn}
        & \textbf{Double Sparsity}
        & \textbf{MagicPig} \\
        \midrule

        Qwen3-4B-Instruct-2507
        & 88.67
        & \textbf{85.80}
        & 85.67
        & 42.03
        & --
        & 39.29
        & 18.72 \\

        Llama-3.2-3B-Instruct
        & 66.25
        & 39.30
        & \textbf{39.59}
        & 34.11
        & --
        & 31.21
        & 17.22 \\

        Llama-3.1-8B-Instruct
        & 87.89
        & \textbf{68.95}
        & 64.42
        & 64.17
        & 49.48
        & 44.62
        & 16.28 \\

        \bottomrule
    \end{tabular}%
    }
\end{table}

\begin{table}[H]
\centering
\caption{Evaluation datasets and subsets used in our experiments.}
\label{tab:datasets_info}
\begin{tabularx}{\linewidth}{l c X}
\toprule
\textbf{Dataset} & \textbf{\# Samples} & \textbf{Subsets} \\
\midrule

RULER-HARD
& 100
& fwe, qa1, qa2, niah\_multikey\_2, niah\_multikey\_3, vt \\

\addlinespace

LongBench
& 200
& NrtvQA, Qasper, MF-en, HotpotQA, 2WikiMQA, Musique, DuReader, GovRep, QMSum, MultiNews, VCSum, TREC, TriviaQA, SAMSum, PCount, PR-en, LCC, RBP \\

\addlinespace

LOFT
& 100
& HotpotQA, Musique, NQ, Qampari, Quest \\

\addlinespace

AIME
& 16
& -- \\

\bottomrule
\end{tabularx}
\end{table}

\begin{table}[H]
\centering
\caption{Performance on RULER-HARD with Llama-3.1-8B.}
\label{tab:ruler_llama}
\setlength{\tabcolsep}{5pt}
\begin{tabular}{cclccccccc}
\toprule
\textbf{Sparsity} & \textbf{Context} & \textbf{Method} 
& \textbf{FWE} & \textbf{VT} & \textbf{NM2} & \textbf{NM3} 
& \textbf{QA1} & \textbf{QA2} & \textbf{Avg.} \\
\midrule

\multirow{15}{*}{\textbf{50$\times$}}
& \multirow{5}{*}{32K}
& Dense          & 93.33 & 99.2 & 100 & 100 & 88 & 52 & 88.76 \\
& & PQ      & 77.67 & 95.4 & 67  & 71  & 76 & 48 & 72.51 \\
& & vAttn (PQ) & 83.67 & 94.6 & 52 & 58 & 66 & 46 & 66.71 \\
& & Oracle-top-$k$       & 81.00 & 97.8 & 87  & 76  & 79 & 52 & 78.80 \\
& & SSA       & 91.00 & 98.0 & 83  & 92  & 85 & 53 & \textbf{83.67} \\
\cmidrule(lr){2-10}

& \multirow{5}{*}{64K}
& Dense          & 86.33 & 98.8 & 98 & 99 & 84 & 53 & 86.52 \\
& & PQ      & 80.00 & 92.2 & 65 & 50 & 71 & 47 & 67.53 \\
& & vAttn (PQ) & 83.33 & 94.0 & 40 & 68 & 67 & 45 & 66.22 \\
& & Oracle-top-$k$       & 79.67 & 95.8 & 78 & 56 & 77 & 45 & 71.91 \\
& & SSA       & 85.33 & 96.2 & 67 & 84 & 77 & 53 & \textbf{77.09} \\
\cmidrule(lr){2-10}

& \multirow{5}{*}{128K}
& Dense          & 73.33 & 66.2 & 91 & 67 & 83 & 47 & 71.26 \\
& & PQ      & 66.00 & 39.4 & 27 & 15 & 57 & 34 & 39.73 \\
& & vAttn (PQ) & 71.67 & 55.0 & 21 & 20 & 41 & 28 & 39.45 \\
& & Oracle-top-$k$       & 61.00 & 66.6 & 71 & 32 & 66 & 40 & \textbf{56.10} \\
& & SSA       & 73.00 & 54.2 & 30 & 30 & 59 & 39 & 47.53 \\

\midrule
\midrule

\multirow{12}{*}{\textbf{20$\times$}}
& \multirow{4}{*}{32K}
& PQ        & 87.33 & 98.6 & 97  & 93  & 84 & 52 & 85.32 \\
& & vAttn (PQ) & 93.33 & 99.4 & 97 & 99 & 84 & 52 & 87.46 \\
& & Oracle-top-$k$       & 90.33 & 98.8 & 100 & 99 & 84 & 52 & 87.36 \\
& & SSA       & 93.67 & 99.0 & 97  & 100 & 87 & 54 & \textbf{88.45} \\
\cmidrule(lr){2-10}

& \multirow{4}{*}{64K}
& PQ        & 86.33 & 96.2 & 91 & 91 & 77 & 55 & 82.76 \\
& & vAttn (PQ) & 86.00 & 98.4 & 87 & 98 & 79 & 57 & 84.23 \\
& & Oracle-top-$k$       & 85.33 & 98.0 & 94 & 95 & 81 & 52 & 84.22 \\
& & SSA       & 85.33 & 98.6 & 94 & 99 & 81 & 54 & \textbf{85.32} \\
\cmidrule(lr){2-10}

& \multirow{4}{*}{128K}
& PQ        & 69.00 & 65.8 & 66 & 46 & 68 & 34 & 58.13 \\
& & vAttn (PQ) & 73.33 & 63.6 & 61 & 51 & 69 & 39 & 59.49 \\
& & Oracle-top-$k$       & 67.00 & 63.2 & 85 & 51 & 72 & 42 & 63.37 \\
& & SSA       & 75.00 & 69.0 & 70 & 63 & 76 & 44 & \textbf{66.17} \\
\bottomrule
\end{tabular}
\end{table}
\begin{table}[H]
\centering
\caption{Performance on RULER-HARD with Ministral-3-3B.}
\label{tab:ruler_ministral3b}
\setlength{\tabcolsep}{5pt}
\begin{tabular}{cclccccccc}
\toprule
\textbf{Sparsity} & \textbf{Context} & \textbf{Method}
& \textbf{FWE} & \textbf{VT} & \textbf{NM2} & \textbf{NM3}
& \textbf{QA1} & \textbf{QA2} & \textbf{Avg.} \\
\midrule

\multirow{15}{*}{\textbf{50$\times$}}
& \multirow{5}{*}{32K}
& Dense             & 96.33 & 91.8 & 98 & 99 & 79 & 56 & 86.69 \\
& & PQ         & 92.67 & 60.2 & 91 & 56 & 73 & 52 & 70.81 \\
& & vAttn (PQ) & 90.33 & 72.2 & 67 & 77 & 65 & 44 & 69.26 \\
& & Oracle-top-$k$          & 93.67 & 62.2 & 99 & 55 & 75 & 52 & 72.81 \\
& & SSA          & 95.67 & 88.6 & 92 & 90 & 72 & 54 & \textbf{82.05} \\
\cmidrule(lr){2-10}

& \multirow{5}{*}{64K}
& Dense             & 74.33 & 99.8 & 93 & 96 & 75 & 55 & 82.19 \\
& & PQ         & 72.67 & 94.6 & 77 & 51 & 69 & 55 & 69.88 \\
& & vAttn (PQ) & 76.00 & 95.6 & 54 & 75 & 52 & 42 & 65.77 \\
& & Oracle-top-$k$          & 75.33 & 98.2 & 89 & 68 & 68 & 52 & 75.09 \\
& & SSA          & 75.00 & 99.2 & 76 & 87 & 67 & 50 & \textbf{75.70} \\
\cmidrule(lr){2-10}

& \multirow{5}{*}{128K}
& Dense             & 92.33 & 98.6 & 95 & 87 & 66 & 50 & 81.49 \\
& & PQ         & 77.67 & 99.2 & 66 & 43 & 66 & 47 & 66.48 \\
& & vAttn (PQ) & 89.33 & 98.0 & 54 & 42 & 54 & 41 & 63.06 \\
& & Oracle-top-$k$          & 88.00 & 98.2 & 79 & 51 & 66 & 49 & \textbf{71.87} \\
& & SSA          & 91.67 & 98.2 & 60 & 67 & 60 & 41 & 69.65 \\

\midrule
\midrule

\multirow{12}{*}{\textbf{20$\times$}}
& \multirow{4}{*}{32K}
& PQ           & 93.33 & 67.6 & 98 & 86 & 79 & 54 & 79.66 \\
& & vAttn (PQ) & 95.33 & 90.8 & 97 & 98 & 76 & 52 & 84.86 \\
& & Oracle-top-$k$          & 96.00 & 73.0 & 99 & 93 & 75 & 52 & 81.33 \\
& & SSA          & 95.67 & 94.2 & 99 & 97 & 79 & 56 & \textbf{86.81} \\
\cmidrule(lr){2-10}

& \multirow{4}{*}{64K}
& PQCache           & 77.00 & 98.4 & 87 & 84 & 70 & 56 & 78.73 \\
& & vAttn (PQ) & 73.00 & 99.8 & 86 & 96 & 73 & 53 & 80.13 \\
& & Oracle          & 77.00 & 99.2 & 93 & 91 & 67 & 52 & 79.87 \\
& & SSA          & 75.33 & 99.8 & 86 & 96 & 73 & 56 & \textbf{81.02} \\
\cmidrule(lr){2-10}

& \multirow{4}{*}{128K}
& PQ           & 88.33 & 99.2 & 71 & 65 & 70 & 50 & 73.92 \\
& & vAttn (PQ) & 89.33 & 98.4 & 59 & 77 & 65 & 44 & 72.12 \\
& & Oracle-top-$k$          & 91.00 & 98.4 & 85 & 75 & 68 & 50 & \textbf{77.90} \\
& & SSA          & 90.33 & 98.6 & 75 & 81 & 67 & 46 & 76.32 \\
\bottomrule
\end{tabular}
\end{table}

\begin{table}[H]
\centering
\caption{Performance on LongBench with Qwen3.5-27B (10K-16K context).}
\label{tab:longbench_qwen35_27b}
\setlength{\tabcolsep}{4pt}

\resizebox{\textwidth}{!}{
\begin{tabular}{cl|cccccccccc}
\toprule
\textbf{Sparsity} & \textbf{Method}
& \textbf{NrtvQA} & \textbf{Qasper} & \textbf{MF-en} & \textbf{MF-zh}
& \textbf{HotpotQA} & \textbf{2WikiMQA} & \textbf{Musique} & \textbf{DuReader}
& \textbf{GovRep} & \textbf{QMSum} \\
\midrule
\multirow{5}{*}{\textbf{50$\times$}}
& Dense               & 20.49 & 29.64 & 38.94 & 42.97 & 34.83 & 34.64 & 26.69 & 25.92 & 31.91 & 21.96 \\
& PQ                  & 20.13 & 28.69 & 38.60 & 42.01 & 33.27 & 31.92 & 20.61 & 25.54 & 31.92 & 21.64 \\
& vAttn(PQ) & 21.97 & 25.33 & 39.38 & 43.53 & 32.00 & 30.44 & 21.45 & 26.91 & 30.68 & 22.42 \\
& SSA                 & 20.42 & 27.51 & 38.87 & 42.13 & 34.64 & 32.79 & 24.88 & 24.82 & 31.96 & 22.72 \\
& Oracle-top-$k$      & 20.51 & 28.97 & 38.22 & 42.12 & 34.79 & 33.77 & 22.50 & 25.76 & 32.86 & 22.52 \\
\midrule
\multirow{4}{*}{\textbf{20$\times$}}
& PQ                  & 20.61 & 29.55 & 38.41 & 41.76 & 34.51 & 33.38 & 23.90 & 25.74 & 31.63 & 22.31 \\
& vAttn(PQ) & 21.55 & 27.17 & 40.14 & 43.67 & 33.39 & 34.18 & 25.65 & 26.70 & 32.20 & 22.27 \\
& SSA                 & 20.90 & 28.80 & 39.73 & 42.91 & 34.41 & 33.97 & 25.24 & 25.93 & 31.36 & 22.58 \\
& Oracle-top-$k$      & 20.74 & 29.28 & 38.68 & 42.87 & 34.90 & 34.57 & 23.46 & 25.82 & 32.77 & 21.97 \\
\bottomrule
\end{tabular}
}

\vspace{6pt}

\resizebox{\textwidth}{!}{
\begin{tabular}{cl|cccccccccc|c}
\toprule
\textbf{Sparsity} & \textbf{Method}
& \textbf{MultiNews} & \textbf{VCSUM} & \textbf{TREC} & \textbf{TriviaQA} & \textbf{SAMSum}
& \textbf{PCount} & \textbf{PRe-en} & \textbf{PRe-zh} & \textbf{LCC} & \textbf{RB-P} & \textbf{Avg.} \\
\midrule
\multirow{5}{*}{\textbf{50$\times$}}
& Dense               & 23.33 & 14.50 & 72.50 & 89.65 & 45.63 & 5.16 & 88.12 & 57.33 & 67.96 & 59.12 & 41.56 \\
& PQ                  & 24.37 & 14.50 & 71.25 & 89.68 & 45.92 & 6.37 & 87.40 & 56.60 & 66.65 & 54.97 & 40.60 \\
& vAttn(PQ) & 23.17 & 14.62 & 71.25 & 89.04 & 43.76 & 6.85 & 85.25 & 59.21 & 66.46 & 53.98 & 40.39 \\
& SSA                 & 23.08 & 14.35 & 72.50 & 89.77 & 44.32 & 5.16 & 86.25 & 59.69 & 64.55 & 58.01 & 40.92 \\
& Oracle-top-$k$      & 24.62 & 14.66 & 72.50 & 89.66 & 46.91 & 5.28 & 87.81 & 53.90 & 68.56 & 59.06 & \textbf{41.25} \\
\midrule
\multirow{4}{*}{\textbf{20$\times$}}
& PQ                  & 24.20 & 14.79 & 70.00 & 89.46 & 45.83 & 5.10 & 89.06 & 54.21 & 67.87 & 57.24 & 40.98 \\
& vAttn(PQ) & 23.41 & 15.05 & 75.00 & 90.52 & 45.12 & 4.68 & 86.88 & 57.23 & 66.02 & 58.56 & 41.47 \\
& SSA                 & 24.01 & 14.67 & 72.50 & 90.18 & 46.93 & 4.76 & 88.12 & 56.81 & 66.85 & 60.47 & \textbf{41.56} \\
& Oracle-top-$k$      & 24.75 & 14.29 & 71.25 & 89.66 & 46.25 & 5.26 & 87.71 & 55.04 & 68.84 & 59.90 & 41.40 \\
\bottomrule
\end{tabular}
}
\end{table}

\begin{table}[H]
\centering
\caption{Performance on RULER-HARD with Qwen3.5-4B at 50$\times$ sparsity.}
\label{tab:ruler_qwen35_4b}
\setlength{\tabcolsep}{5pt}
\begin{tabular}{clccccccc}
\toprule
\textbf{Context} & \textbf{Method}
& \textbf{FWE} & \textbf{VT} & \textbf{NM2} & \textbf{NM3}
& \textbf{QA1} & \textbf{QA2} & \textbf{Avg.} \\
\midrule

\multirow{5}{*}{32K}
& Dense             & 99.67 & 100 & 100 & 100 & 89 & 63 & 91.95 \\
& PQ           & 97.33 & 100 & 100 & 100 & 88 & 62 & 91.22 \\
& vAttn (PQ)   & 96.33 & 100 & 100 & 100 & 89 & 58 & 90.56 \\
& Oracle-top-$k$           & 99.33 & 100 & 100 & 100 & 88 & 61 & 91.39 \\
& SSA            & 99.67 & 100 & 100 & 100 & 89 & 60 & \textbf{91.45} \\
\cmidrule(lr){1-9}

\multirow{5}{*}{64K}
& Dense             & 99.67 & 100 & 100 & 100 & 88 & 59 & 91.11 \\
& PQ           & 98.33 & 100 & 100 & 100 & 88 & 58 & 90.72 \\
& vAttn (PQ)   & 99.67 & 100 & 100 & 100 & 87 & 57 & 90.61 \\
& Oracle-top-$k$            & 99.33 & 100 & 100 & 100 & 86 & 57 & 90.39 \\
& SSA            & 99.00 & 100 & 100 & 100 & 88 & 58 & \textbf{90.83} \\
\cmidrule(lr){1-9}

\multirow{5}{*}{128K}
& Dense             & 96.67 & 100 & 100 & 100 & 84 & 56 & 89.45 \\
& PQ           & 95.33 & 100 & 100 & 100 & 82 & 55 & 88.72 \\
& vAttn (PQ)   & 96.67 & 100 & 100 & 100 & 82 & 57 & 89.28 \\
& Oracle-top-$k$            & 97.33 & 100 & 100 & 100 & 82 & 57 & 89.39 \\
& SSA            & 97.00 & 100 & 100 & 100 & 82 & 58 & \textbf{89.50} \\

\bottomrule
\end{tabular}
\end{table}

\begin{table}[H]
\centering
\caption{Performance on RULER-HARD with Qwen3.5-9B at 50$\times$ sparsity.}
\label{tab:ruler_hard_qwen35_9b}
\setlength{\tabcolsep}{5pt}
\begin{tabular}{clccccccc}
\toprule
\textbf{Context} & \textbf{Method}
& \textbf{FWE} & \textbf{VT} & \textbf{NM2} & \textbf{NM3}
& \textbf{QA1} & \textbf{QA2} & \textbf{Avg.} \\
\midrule

\multirow{5}{*}{32K}
& Dense             & 99.67 & 100 & 100 & 100 & 93 & 68 & 93.45 \\
& PQ         & 94.00 & 100 & 100 & 100 & 91 & 67 & 92.00 \\
& vAttn (PQ)   & 98.00 & 100 & 100 & 100 & 89 & 63 & 91.67 \\
& Oracle-top-$k$            & 98.33 & 100 & 99 & 100 & 91 & 67 & 92.56 \\
& SSA            & 100.00 & 100 & 100 & 100 & 93 & 67 & \textbf{93.33} \\
\cmidrule(lr){1-9}

\multirow{5}{*}{64K}
& Dense             & 99.33 & 100 & 100 & 100 & 88 & 66 & 92.22 \\
& PQ           & 95.67 & 99.8 & 100 & 100 & 86 & 65 & 91.08 \\
& vAttn (PQ)   & 98.00 & 99.8 & 100 & 99 & 88 & 63 & 91.30 \\
& Oracle-top-$k$            & 98.67 & 100 & 100 & 100 & 89 & 66 & 92.28 \\
& SSA            & 99.33 & 99.8 & 100 & 100 & 89 & 66 & \textbf{92.36} \\
\cmidrule(lr){1-9}

\multirow{5}{*}{128K}
& Dense             & 99.67 & 100 & 100 & 100 & 87 & 61 & 91.28 \\
& PQ         & 97.67 & 100 & 98 & 100 & 88 & 62 & 90.95 \\
& vAttn (PQ)   & 98.67 & 100 & 98 & 100 & 84 & 63 & 90.61 \\
& Oracle-top-$k$            & 99.33 & 100 & 99 & 100 & 88 & 62 & 91.39 \\
& SSA            & 99.67 & 100 & 100 & 100 & 88 & 61 & \textbf{91.45} \\
\bottomrule
\end{tabular}
\end{table}

\begin{table}[H]
\centering
\caption{
Performance on RULER-HARD with Ministral-3-8B.
}
\label{tab:ministral_ruler_hard}
\resizebox{\textwidth}{!}{
\begin{tabular}{cc|l|cccccc|c}
\toprule
\textbf{Context} & \textbf{Sparsity} & \textbf{Method}
& \textbf{FWE} & \textbf{VT} & \textbf{NM2} & \textbf{NM3}
& \textbf{QA1} & \textbf{QA2} & \textbf{Avg.} \\
\midrule

\multirow{5}{*}{32K}
& \multirow{5}{*}{50$\times$}
& Dense
& 94.00 & 100.00 & 99.00 & 100.00 & 80.00 & 63.00 & 89.33 \\

& & PQ
& 88.00 & 100.00 & 97.00 & 67.00 & 80.00 & 59.00 & 81.83 \\

& & vAttn (PQ)
& 89.33 & 98.80 & 85.00 & 87.00 & 64.00 & 48.00 & 78.69 \\

& & SSA
& \textbf{94.00} & \textbf{100.00} & 88.00 & \textbf{91.00}
& 74.00 & 58.00 & 84.17 \\

& & Oracle-top-$k$
& 90.67 & \textbf{100.00} & \textbf{99.00} & 90.00
& \textbf{81.00} & \textbf{59.00} & \textbf{86.61} \\

\midrule

\multirow{5}{*}{64K}
& \multirow{5}{*}{50$\times$}
& Dense
& 81.00 & 99.80 & 100.00 & 100.00 & 75.00 & 61.00 & 86.13 \\

& & PQ
& 84.33 & 99.80 & 89.00 & 71.00 & 68.00 & 56.00 & 78.02 \\

& & vAttn (PQ)
& 81.33 & 99.80 & 71.00 & 87.00 & 63.00 & 49.00 & 75.19 \\

& & SSA
& 81.67 & 99.80 & 80.00 & \textbf{95.00}
& 69.00 & 54.00 & 79.91 \\

& & Oracle-top-$k$
& \textbf{84.67} & 99.80 & \textbf{100.00} & 82.00
& \textbf{75.00} & \textbf{59.00} & \textbf{83.41} \\

\midrule

\multirow{5}{*}{128K}
& \multirow{5}{*}{50$\times$}
& Dense
& 92.67 & 100.00 & 100.00 & 90.00 & 62.00 & 44.00 & 81.45 \\

& & PQ
& 79.33 & 99.60 & 63.00 & 32.00 & 63.00 & 41.00 & 62.99 \\

& & vAttn (PQ)
& 89.67 & \textbf{100.00} & 36.00 & 61.00 & 59.00 & 37.00 & 63.78 \\

& & SSA
& \textbf{90.67} & 99.80 & 50.00 & \textbf{71.00}
& 62.00 & 42.00 & 69.25 \\

& & Oracle-top-$k$
& 85.00 & \textbf{100.00} & \textbf{93.00} & 67.00
& \textbf{64.00} & \textbf{46.00} & \textbf{75.83} \\

\midrule\midrule

\multirow{4}{*}{32K}
& \multirow{4}{*}{20$\times$}
& PQ
& 92.33 & 100.00 & \textbf{99.00} & 97.00
& \textbf{83.00} & 60.00 & \textbf{88.56} \\

& & vAttn (PQ)
& \textbf{95.00} & 100.00 & 96.00 & \textbf{100.00}
& 78.00 & 61.00 & 88.33 \\

& & SSA
& 93.33 & 100.00 & 97.00 & 99.00
& 77.00 & \textbf{63.00} & 88.22 \\

& & Oracle-top-$k$
& 92.00 & 100.00 & \textbf{99.00} & 98.00
& 78.00 & \textbf{63.00} & 88.33 \\

\midrule

\multirow{4}{*}{64K}
& \multirow{4}{*}{20$\times$}
& PQ
& \textbf{83.67} & 99.80 & 99.00 & 97.00
& \textbf{74.00} & \textbf{60.00} & \textbf{85.58} \\

& & vAttn (PQ)
& 83.33 & 99.80 & 94.00 & 99.00
& 72.00 & 57.00 & 84.19 \\

& & SSA
& 81.67 & 99.80 & 95.00 & \textbf{100.00}
& 73.00 & 58.00 & 84.58 \\

& & Oracle-top-$k$
& 83.00 & 99.80 & \textbf{100.00} & 95.00
& 71.00 & \textbf{60.00} & 84.80 \\

\midrule

\multirow{4}{*}{128K}
& \multirow{4}{*}{20$\times$}
& PQ
& 85.33 & \textbf{100.00} & 88.00 & 80.00
& \textbf{66.00} & 44.00 & 77.22 \\

& & vAttn (PQ)
& 91.67 & 99.80 & 77.00 & \textbf{89.00}
& 62.00 & 43.00 & 77.08 \\

& & SSA
& \textbf{93.33} & 99.80 & 81.00 & \textbf{89.00}
& 62.00 & 42.00 & 77.86 \\

& & Oracle-top-$k$
& 90.67 & \textbf{100.00} & \textbf{99.00} & 84.00
& 64.00 & \textbf{44.00} & \textbf{80.28} \\

\bottomrule
\end{tabular}
}
\end{table}

\begin{table}[H]
\centering
\caption{
Ablation on the heavy-hitter fraction $h$ for SSA on AIME 2025.
A fraction $h$ of the sparse attention budget is allocated to deterministic
heavy hitters, while the remaining $1-h$ is allocated to importance sampling.
Thus, $h=0$ corresponds to pure importance sampling and $h=1$ corresponds
to PQCache (pure top-$k$ retrieval).
}
\label{tab:aime_heavy_size}
\begin{tabular}{c|cc}
\toprule
\textbf{Heavy Fraction $h$} & \textbf{20$\times$} & \textbf{10$\times$} \\
\midrule
0.0 & 40.2 & 42.5 \\
0.2 & 61.0 & 63.8 \\
0.4 & 63.5 & \textbf{64.0} \\
0.6 & 62.7 & 62.7 \\
0.8 & \textbf{63.8} & 63.1 \\
1.0 (PQ) & 49.4 & 53.8 \\
\bottomrule
\end{tabular}
\end{table}

\end{document}